\newcommand{\R}{\mathbb{R}}
\newcommand{\orderof}[1]{\mathcal{O}\left(#1\right)}
\newcommand{\vct}[1]{\boldsymbol{#1}}
\newcommand{\mtx}[1]{\boldsymbol{#1}}
\newcommand{\set}[1]{\mathcal{#1}}
\newcommand{\vb}{\vct{b}}
\newcommand{\vc}{\vct{c}}
\newcommand{\vh}{\vct{h}}
\newcommand{\vo}{\vct{o}}
\newcommand{\vw}{\vct{w}}
\newcommand{\vx}{\vct{x}}
\newcommand{\mW}{\mtx{W}}
\newcommand{\setB}{\set{B}}
\newcommand{\setC}{\set{C}}
\newcommand{\setH}{\set{H}}
\newcommand{\setQ}{\set{Q}}
\newcommand{\setR}{\set{R}}
\newcommand{\setS}{\set{S}}
\newcommand{\setT}{\set{T}}
\newtheorem{definition}{Definition}[section]
\newtheorem{theorem}{Theorem}[section]
\newtheorem{lemma}[theorem]{Lemma}
\newenvironment{proof}{\paragraph{Proof:}}{\hfill$\square$}
\begin{document}
%
\title{Traversing the Local Polytopes of ReLU Neural Networks: A Unified Approach for \\ Network Verification}
%
%
%

\author{Shaojie~Xu,
            Joel~Vaughan,
            Jie~Chen,
            Aijun~Zhang,
	       Agus~Sudjianto
\thanks{The authors are with Wells Fargo \& Company. The views expressed in the paper are those of the authors and do not represent the views of Wells Fargo.}
}

\maketitle

\begin{abstract}
Although neural networks (NNs) with ReLU activation functions have found success in a wide range of applications, their adoption in risk-sensitive settings has been limited by the concerns on robustness and interpretability. Previous works to examine robustness and to improve interpretability partially exploited the piecewise linear function form of ReLU NNs. In this paper, we explore the unique topological structure that ReLU NNs create in the input space, identifying the adjacency among the partitioned local polytopes and developing a traversing algorithm based on this adjacency. Our polytope traversing algorithm can be adapted to verify a wide range of network properties related to robustness and interpretability, providing an unified approach to examine the network behavior. As the traversing algorithm explicitly visits all local polytopes, it returns a clear and full picture of the network behavior within the traversed region. The time and space complexity of the traversing algorithm is determined by the number of a ReLU NN's partitioning hyperplanes passing through the traversing region.
\end{abstract}

\begin{IEEEkeywords}
ReLU NNs, Piecewise-Linear NNs, Adversarial Attack, Robustness, Interpretability, Network Verification
\end{IEEEkeywords}

%
\IEEEpeerreviewmaketitle

\section{Introduction \& Related Work} \label{sec:intro}
Neural networks with rectified linear unit activation functions (ReLU NNs) are arguably the most popular type of neural networks in deep learning. This type of network enjoys many appealing properties including better performance than NNs with sigmoid activation \cite{glorot2011deep}, universal approximation ability \cite{arora2018understanding, lu2017expressive, montufar2014number, schmidt2020nonparametric}, and fast training speed via scalable algorithms such as stochastic gradient descent (SGD) and its variants \cite{zou2020gradient}. 

Despite their strong predictive power, ReLU NNs have seen limited adoption in risk-sensitive settings \cite{bunel2018unified}. These settings require the model to make robust predictions against potential adversarial noise in the input \cite{athalye2018synthesizing, carlini2017towards, goodfellow2014explaining, szegedy2014intriguing}. The alignment between model behavior and human intuition is also desirable \cite{liu2019algorithms}: prior knowledge such as monotonicity may be incorporated into model design and training \cite{daniels2010monotone, gupta2019incorporate, liu2020certified, sharma2020testing}; users and auditors of the model may require a certain degree of explanations of the model predictions \cite{gopinath2019property, chu2018exact}.

The requirements in risk-sensitive settings has motivated a great amount of research on verifying certain properties of ReLU NNs. These works often exploit the piecewise linear function form of ReLU NNs. In \cite{bastani2016measuring} the robustness of a network is verified in very small input region via linear programming (LP). To consider the non-linearity of ReLU activation functions, \cite{ehlers2017formal, katz2017reluplex, pulina2010abstraction, pulina2012challenging} formulated the robustness verification problem as a satisfiability modulo theories (SMT) problem. A more popular way to model ReLU nonlinearality is to introduce a binary variable representing the on-off patterns of ReLU neurons. Property verification can then be solved using mixed-integer programming (MIP) \cite{anderson2020strong, fischetti2017deep, liu2020certified, tjeng2018evaluating, weng2018towards}.

The piecewise linear functional form of ReLU NNs also creates distinct topological structures in the input space. Previous studies have shown that a ReLU NN partitions the input space into convex polytopes and has one linear model associated with each polytope \cite{montufar2014number, serra2018bounding, croce2019provable, robinson2019dissecting, sudjianto2020unwrapping, yang2020reachability}. Each polytope can be coded by a binary activation code, which reflects the on-off patterns of the ReLU neurons. The number of local polytopes is often used as a measure of the model's expressivity \cite{hanin2019deep, lu2017expressive}. Built upon this framework, multiple studies \cite{sudjianto2020unwrapping, yang2020enhancing, zhao2021self} tried to explain the behavior of ReLU NNs and to improve their interpretability. They viewed ReLU NN as a collection of linear models. However, the relationship among the local polytopes and their linear models was not fully investigated.

When the network's behavior within some specific region in the input space is of interest, one can collect all the local polytopes overlapped with the region to conduct analysis. The methods to collect these polytopes can be categorized into top-down and bottom-up approaches. The top-down approaches in \cite{xiang2017reachable, yang2020reachability} pass the entire region of interest through a ReLU NN and calculate how the hyperplanes corresponding to the neurons partition the region into local polytopes. The major drawback of the top-down approach is that the analysis must start after the computationally expensive forward passing is fully finished. 

One the contrary, the bottom-up approaches start from a point of interest inside the region, moving from one local polytope to another while running the analysis, and can be stopped at any time. \cite{croce2018randomized, croce2020scaling} achieved the movement among polytopes by generating a sequence of samples in the input space using randomized local search. Although being computationally simple, this sample-based method does not guarantee covering all polytopes inside the region of interest. The most recent work and also the closest to ours is \cite{vincent2021reachable}, where polytope boundaries and adjacency are identified using LP, and the traversing is done directly on the polytopes.

In this paper, we explore the topological relationship among the local polytopes created by ReLU NNs. We propose algorithms to identify the adjacency among these polytopes, based on which we develop traversing algorithms to visit all polytopes within a bounded region in the input space. Compared with \cite{vincent2021reachable}, our polytope traversing algorithm exploits ReLU NNs' hierarchical partitioning of the input space to reduce computational overhead and accelerates the discovering of adjacent polytopes. The thoroughness of our traversing algorithm is proved. Our paper has the following major contributions:
\begin{enumerate}
\item The polytope traversing algorithm provides a unified framework to examine the network behavior. Since each polytope contains a linear model whose properties are easy to verify, the full verification on a bounded domain is achieved after all the covered polytopes are visited and verified. We provide theoretical guarantees on the thoroughness of the traversing algorithm.
\item Property verification based on the polytope traversing algorithm can be easily customized. Identifying the adjacency among the polytopes is formulated as LP. Within each local polytope, the user has the freedom to choose the solver most suitable for the verification sub-problem. We demonstrate that many common applications can be formulated as convex problems within each polytope.
\item Because the polytope traversing algorithm explicitly visits all the local polytopes, it returns a full picture of the network behavior within the traversed region and improves interpretability.
\end{enumerate}
Although we focus on ReLU NN with fully connected layers through out this paper, our polytope traversing algorithm can be naturally extended to other piecewise linear networks such as those containing convolutional and maxpooling layers.

The rest of this paper is organized as follows: Section \ref{sec:llpolytopes} reviews how polytopes are created by ReLU NNs. Section \ref{sec:boundary} introduces two related concepts: the boundaries of a polytope and the adjacency among the polytopes. Our polytope traversing algorithm is described in Section \ref{sec:polytope_traversing}. Section \ref{sec:apps} demonstrates several applications of adapting the traversing algorithm for network property verification. Two specific cases studies are shown in Section \ref{sec:casestudies}. The paper is concluded in Section \ref{sec:conclusion}.

\section{The Local Polytopes in ReLU NNs} \label{sec:llpolytopes}
\subsection{The case of one hidden layer} \label{sec:llpolytopesI}
A ReLU NN partitions the input space $\R^P$ into several polytopes and forms a linear model within each polytope. To see this, we first consider a simple NN with one hidden layer of $M$ neurons. It takes an input $\vx \in \R^P$ and outputs $\vo \in \R^Q$ by calculating:
\small{
\begin{equation}
\begin{split}
\vo = \mW^o\vh + \vb^o &= \mW^o\left(\sigma(\mW\vx + \vb)\right) + \vb^o \\
\text{where}\ 
\sigma(\vx)_m &= 
\begin{cases} 
0,\ & \vx_m < 0 \\
\vx_m,\ & \vx_m \geq 0
\end{cases}
\ .
\end{split} \label{eq:relu_nn_I}
\end{equation}
}%
For problems with a binary or categorical target variable (i.e. binary or multi-class classification), a sigmoid or softmax layer is added after $\vo$ respectively to convert the convert the NN outputs to proper probabilistic predictions.

The ReLU activation function $\sigma({\cdot})$ inserts non-linearity into the model by checking a set of linear inequalities: $\vw_m^T\vx + b_m \geq 0, \ m = 1 , 2, \ldots, M$, where $\vw_m^T$ is the $m$th row of matrix $\mW$ and $b_m$ is the $m$th element of $\vb$. Each neuron in the hidden layer creates a \textbf{partitioning hyperplane} in the input space with the linear equation $\vw_m^T\vx + b_m = 0$. The areas on two sides of the hyperplane are two \textbf{halfspaces}. The entire input space is, therefore, partitioned by these $M$ hyperplanes. We define a \textbf{local polytope} as a set containing all points that fall on the same side of each and every hyperplane. The polytope encoding function (\ref{eq:polytope_encode}) uses an element-wise indicator function $\mathbbm{1}(\cdot)$ to create a unique binary code $\vc$ for each polytope. Since the $m$th neuron is called ``ON'' for some $\vx$ if $\vw_m^T\vx + b_m \geq 0$, the code $\vc$ also represents the on-off pattern of the neurons. Using the results of this encoding function, we can express each polytope as an intersection of $M$ halfspaces as in (\ref{eq:polytope}), where the binary code $\vc$ controls the directions of the inequalities.
{\small
\begin{align}
C(\vx) = &\mathbbm{1}(\mW\vx + \vb \geq 0) \ . \label{eq:polytope_encode} \\
\setR_{\vc} = \{ \vx\ |\ (-1)^{c_m} (\vw_m^T\vx &+ b_m \leq 0),\ \forall m=1,\ldots,M \} \ .  \label{eq:polytope}
\end{align}
}%

Figure \ref{fig:grid_nets}.(b) shows an example of ReLU NN trained on a two-dimensional synthetic dataset (plotted in Figure \ref{fig:grid_nets}.(a)). The bounded input space is $[-1, 1]^2$ and the target variable is binary. The network has one hidden layer of 20 neurons. The partitioning hyperplanes associated with these neurons are plotted as the blue dashed lines. They form in total 91 local polytopes within the bounded input space.

For a given $\vx$, if $\vw_m^T\vx + b_m \geq 0$, the ReLU neuron turns on and passes through the value. Otherwise, the neuron is off and suppresses the value to zero. Therefore, if we know the $m$th neuron is off, we can mask the corresponding $\vw_m$ and $b_m$ by zeros and create $\tilde{\mW}_{\vc}$ and $\tilde{\vb}_{\vc}$ that satisfy (\ref{eq:zero_masking_locally_linear}). The non-linear operation, therefore, can be replaced by the a locally linear operation after zero-masking. Because each local polytope $\setR_{\vc}$ has a unique neuron activation pattern encoded by $\vc$, the zero-masking process in (\ref{eq:zero_masking}) is also unique for each polytope. Here, $\mathbf{1}$ is a vector of 1s of length $p$ and $\otimes$ denotes element-wise product.
{\small
\begin{align}
\tilde{\mW}_{\vc} = \mW \otimes (\vc\mathbf{1}^T) \ ,\ \tilde{\vb}_{\vc} = \vb \otimes \vc \ , \label{eq:zero_masking} \\
\sigma(\mW\vx + \vb) = \tilde{\mW}_{\vc} \vx + \tilde{\vb}_{\vc},\quad \forall \vx \in \setR_{\vc} \ . \label{eq:zero_masking_locally_linear}
\end{align}
}%

Within each polytope, as the non-linearity is taken out by the zero-masking process, the input $\vx$ and output $\vo$ have a linear relationship:
{\small
\begin{equation}
\begin{split}
\vo = \mW^o(\sigma(\mW\vx + \vb)) + \vb^o &= \hat{\mW}_{\vc}^o\vx + \hat{\vb}_{\vc}^o \ ,\ \forall \vx \in \setR_{\vc} \ , \\
\text{where}\ \hat{\mW}_{\vc}^o =\mW^o\tilde{\mW}_{\vc} \ &,\ \hat{\vb}_{\vc}^o = \mW^o\tilde{\vb}_{\vc} + \vb^o
\end{split}
\end{equation}
}%
The linear model associated with polytope $\setR_{\vc}$ has the weight matrix $\hat{\mW}_{\vc}^o$ and the bias vector $\hat{\vb}_{\vc}^o$. The ReLU NN is now represented by a collection of linear models, each defined on a local polytope $\setR_{\vc}$. 

In Figure \ref{fig:grid_nets}.(b), we represent the linear model in each local polytopes as a red solid line indicating $\left(\hat{\vw}^o_{\vc}\right)^T\vx + \hat{b}^o_{\vc} = 0$. In this binary response case, the two sides of this line have the opposite class prediction. We only plot the line if it passes through its corresponding polytope. For other polytopes, the entire polytopes fall on one side of their corresponding class-separating lines and the predicted class is the same within the whole polytope. The red lines all together form the decision boundary of the ReLU NN and are continuous when passing through one polytope to another. This is a direct result of ReLU NN being a continuous model.

\begin{figure*}[t]
\center
\includegraphics[width=1.75\columnwidth]{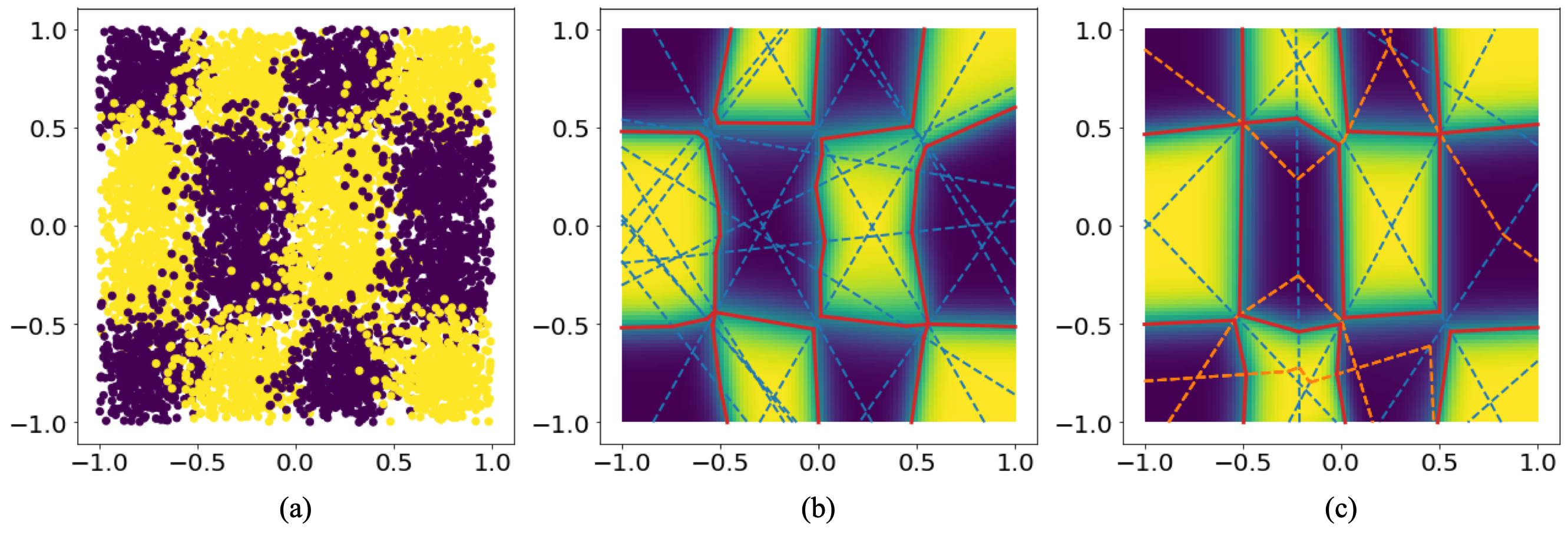}
\caption{\small Examples of trained ReLU NNs and their local polytopes. (a) The grid-like training data with binary target variable. (b) A trained ReLU NN with one hidden layer of 20 neurons. The heatmap shows the predicted probability of a sample belong to class 1. The blue dashed lines are the partitioning hyperplanes associated with the ReLU neurons, which form 91 local polytopes in total. The red solid lines represent the linear model within each polytope where class separation occurs. (c) A trained ReLU NN with two hidden layers of 10 and 5 neurons respectively. The blue dashed lines are the partitioning hyperplanes associated with the first 10 ReLU neurons, forming 20 level-1 polytopes. The orange dashes lines are the partitioning hyperplanes associated with the second 5 ReLU neurons within each level-1 polytope. There are in total 41 (level-2) local polytopes. The red solid lines represent the linear model within each level-2 polytope where class separation occurs.}
\label{fig:grid_nets}
\end{figure*} 

\subsection{The case of multiple layers} \label{sec:hierarchical_polytopes}
We can generalize the results to ReLU NNs with multiple hidden layers. A ReLU NN with $L$ hidden layers hierarchically partitions the input space and is locally linear in each and every \textbf{level-$L$ polytope}. Each level-$L$ polytope $\setR^L$ has a unique binary code $\vc^1\vc^2\ldots\vc^L$ representing the activation pattern of the neurons in all $L$ hidden layers. The corresponding partitioning hyperplanes of each level, $\hat{\mW}^{l} \vx + \hat{\vb}^{l} = 0$, $l=1,2,\ldots,L$, can be calculated recursively level by level, using the zero masking procedure:
{\small
\begin{align}
&\hat{\mW}^1 = \mW^1 \ , \ \hat{\vb}^1 = \vb^1 \label{eq:cal_ieq_begin} \\ 
&\tilde{\mW}^{l} = \hat{\mW}^{l} \otimes (\vc^{l}\mathbf{1}^T) \ ,\ \tilde{\vb}^{l} = \hat{\vb}^{l} \otimes \vc^{l} \label{eq:zero_masking_level_l} \\
&\hat{\mW}^{l+1} = \mW^{l+1}\tilde{\mW}^{l}\ , \  \hat{\vb}^{l+1} = \mW^{l+1}\tilde{\vb}^{l} + \vb^{l+1} \label{eq:coeffs_level_l} \ .
\end{align}
}%
We emphasis that $\tilde{\mW}^l$, $\tilde{\vb}^l$, $\hat{\mW}^{l+1}$, and $\hat{\vb}^{l+1}$ depend on all polytope code up to level $l$: $\vc^1\vc^2\ldots\vc^l$. The subscription $\vc$ is dropped to simplify the notations.

At each level $l$, the encoding function $C^l(\cdot)$ and the polytope $\setR^l$ expressed as an intersection of $\sum_{t=1}^l M_t$ halfspaces can be written recursively as:
{\small
\begin{align}
&C^1(\vx) = \mathbbm{1}(\mW^1\vx + \vb^1 \geq 0)  \\
\begin{split}
&\setR^1 = \{ \vx\ |\ (-1)^{c_{m}} \left((\vw^1)_{m}^T\vx + (b^1)_{m} \leq 0 \right),\\ 
&\quad\quad\quad\quad\forall m=1,2,\ldots,M_1 \}
\end{split}\\
&C^{l+1}(\vx) = \mathbbm{1}(\hat{\mW}^{l+1}\vx + \hat{\vb}^{l+1} \geq 0) \ ,\ \forall \vx \in \setR^{l} \label{eq:polytope_encoding_l} \\
\begin{split}
&\setR^{l+1} = \{ \vx\ |\ (-1)^{c_{m}} \left( (\hat{\vw}^{l+1})_{m}^T\vx + (\hat{b}^{l+1})_{m} \leq 0 \right),\\
&\quad\quad\quad\quad\forall m=1,2,\ldots,M_{l+1} \}\ \cap\ \setR^{l} \ . 
\end{split} \label{eq:polytope_level_l}
\end{align}
}%
Finally, the linear model in a level-$L$ polytope is:
{\small
\begin{equation}
\begin{split}
\vo = \hat{\mW}^o\vx + \hat{\vb}^o \ &,\ \forall \vx \in \setR^L \ , \\ 
\text{where}\ \hat{\mW}^o =\mW^o\tilde{\mW}^L \ &,\ \hat{\vb}^o = \mW^o\tilde{\vb}^L + \vb^o \ . \label{eq:local_model}
\end{split}
\end{equation}
}%

Figure \ref{fig:grid_nets}.(c) shows an example of ReLU NN with two hidden layers of size 10 and 5 respectively. The partitioning hyperplanes associated with the first 10 neuron are plotted as the blue dashed lines. They form 20 level-1 polytopes within the bounded input space. Within each of the level-1 polytope, the hyperplanes associated with the second 5 neurons further partition the polytope. In many cases, some of the 5 hyperplanes are outside the level-1 polytope and, therefore, not creating a new sub-partition. The hyperplanes do create new partitions are plotted as the orange dashed lines. The orange lines are only straight within a level-1 polytope but are continuous when passing through one polytope to another, which is also a result of ReLU NN being a continuous model. In total, this ReLU NN creates 41 (level-2) local polytopes. As in Figure \ref{fig:grid_nets}.(b), the linear model within each level-2 polytope is represented as a red solid line if class separation occurs within the polytope.

\section{Polytope Boundaries and Adjacency} \label{sec:boundary}
Beyond viewing ReLU NNs as a collection of linear models defined on local polytopes, we explore the topological relationship among these polytopes. A key concept is the \textbf{boundaries} of each polytope. As shown in (\ref{eq:polytope_level_l}), each level-$l$ polytope $\setR_{\vc}$ with corresponding binary code $\vc=\vc^1\vc^2\ldots\vc^l$ is an intersection of $\sum_{t=1}^l M_t$ halfspaces induced by a set of inequality constraints. Two situations can rise among these inequalities. First, an arbitrary $\vc$ may lead to conflicting inequalities and makes $\setR_{\vc}$ an empty set. This situation can be common when the number of neurons is much larger than the dimension of the input space. Second, there can be \textbf{redundant inequalities} which means removing them does not affect set $\setR_{\vc}$. We now show that the non-redundant inequalities are closely related to the boundaries of a polytope.

\begin{definition}
Let $\setR$ contains all $\vx\in\R^P$ that satisfy $M$ linear inequalities:  $\setR = \{ \vx | g_1(\vx) \leq 0, g_2(\vx) \leq 0,\ldots, g_M(\vx) \leq 0 \}$. Assume that $\setR \neq \emptyset$. Let $\tilde{\setR}$ contains all $\vx$'s that satisfy $M-1$ linear inequalities:  $\tilde{\setR} = \{ \vx | g_1(\vx) \leq 0, \ldots, g_{m-1}(\vx) \leq 0 ,g_{m+1}(\vx) \leq 0, \ldots, g_M(\vx) \leq 0 \}$. Then the inequality $g_m(\vx) \leq 0$ is a \textbf{redundant inequality} with respect to (w.r.t.) $\setR$ if $\setR = \tilde{\setR}$. \label{def:redundant_ieq}
\end{definition}

With the redundant inequality defined above, the following lemma provides an algorithm to identify them. The proof of this lemma is in the Appendix.
\begin{lemma}
Given a set $\setR = \{ \vx | g_1(\vx) \leq 0,\ldots, g_M(\vx) \leq 0 \} \neq \emptyset$, then $g_m(\vx)$ is a redundant inequality if the new set formed by flipping this inequality is empty: $\hat{\setR} = \{ \vx | g_1(\vx) \leq 0, \ldots, g_{m}(\vx) \geq 0, \ldots, g_M(\vx) \leq 0 \} = \emptyset$. \label{them:redundant_ieq}
\end{lemma}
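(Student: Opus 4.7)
The plan is to establish the set equality $\setR = \tilde{\setR}$ directly from the hypothesis $\hat{\setR} = \emptyset$, since this is exactly the condition in Definition \ref{def:redundant_ieq} for $g_m$ to be redundant. One inclusion is automatic: $\setR$ is carved out by one more constraint than $\tilde{\setR}$, so every $\vx \in \setR$ also satisfies the defining inequalities of $\tilde{\setR}$, giving $\setR \subseteq \tilde{\setR}$.

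The substantive step is the reverse inclusion $\tilde{\setR} \subseteq \setR$, which I would handle by contradiction. Suppose some $\vx \in \tilde{\setR} \setminus \setR$ exists. By membership in $\tilde{\setR}$, the point $\vx$ satisfies $g_j(\vx) \leq 0$ for every $j \neq m$. The only constraint appearing in $\setR$ but not in $\tilde{\setR}$ is $g_m(\vx) \leq 0$, so the failure $\vx \notin \setR$ forces $g_m(\vx) > 0$. In particular $g_m(\vx) \geq 0$, so $\vx$ meets every inequality in the definition of $\hat{\setR}$, placing $\vx \in \hat{\setR}$ and contradicting $\hat{\setR} = \emptyset$.

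I do not foresee any serious obstacle; the argument is essentially a bookkeeping check on which inequalities are active, and the only subtle point to document is the boundary case $g_m(\vx) = 0$. Such points satisfy $g_m(\vx) \leq 0$ with equality and therefore already lie in $\setR$, so they never produce a candidate counterexample. It is worth remarking in a closing sentence that the converse implication does not hold in general: duplicated or otherwise redundant constraints can leave the flipped set nonempty, so $\hat{\setR} = \emptyset$ is a sufficient but not necessary criterion for redundancy. This is precisely the shape of a linear-programming-based screening rule that will be invoked repeatedly by the traversing algorithm developed later in the paper.
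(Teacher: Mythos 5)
Your proof is correct and is essentially the paper's own argument: the paper observes the set identity $\tilde{\setR} = \setR \cup \hat{\setR}$ and concludes $\setR = \tilde{\setR}$ when $\hat{\setR} = \emptyset$, while you verify the same decomposition pointwise via a contradiction on a hypothetical $\vx \in \tilde{\setR} \setminus \setR$. Your closing remark that the converse fails (e.g.\ for duplicated constraints) also matches the note the paper makes immediately after its proof.
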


We can now define the boundaries of a polytope formed by a set of linear inequalities using a similar procedure in Lemma\ref{them:redundant_ieq}. The concept of polytope boundaries also leads to the definition of adjacency. Intuitively, we can move from one polytope to its adjacent polytope by crossing a boundary. 
\begin{definition}
Given a non-empty set formed by $M$ linear inequalities: $\setR = \{ \vx | g_1(\vx)\leq0,\ldots, g_M(\vx)\leq0 \} \neq \emptyset$, then the hyperplane $g_m(\vx) = 0$ is a \textbf{boundary} of $\setR$ if the new set formed by flipping the corresponding inequality is non-empty: $\hat{\setR} = \{ \vx | g_1(\vx) \leq 0, \ldots, g_{m}(\vx) \geq 0, \ldots, g_M(\vx) \leq 0 \} \neq \emptyset$. Polytope $\hat{\setR}$ is called \textbf{one-adjacent} to $\setR$. \label{def:boundary_adj}
\end{definition}

Since for each polytope the directions of its linear inequalities are reflected by the binary code, two one-adjacent polytopes must have their code differ by one bit. Figure \ref{fig:polytope_traversing}.(a) demonstrates the adjacency among the local polytopes. The ReLU NN is the same as in Figure \ref{fig:grid_nets}.(b). Using the procedure in Definition \ref{def:boundary_adj}, 4 out of the 20 partitioning hyperplanes are identified as the boundaries of polytope No.0 and marked in red. The 4 one-adjacent neighbors to polytope No.0 are No.1, 2, 3, and 4; each can be reached by crossing one boundary. 

As we have shown in the Section \ref{sec:hierarchical_polytopes}, ReLU NNs create polytopes level by level. We follow the same hierarchy to define the polytope adjacency. Assume two non-empty level-$l$ polytopes, $\setR$ and $\hat{\setR}$, are inside the same level-$(l-1)$ polytope, which means their corresponding code $\vc=\vc^1\vc^2\ldots\vc^l$ and $\hat{\vc}=\vc^1\vc^2\ldots\hat{\vc}^l$ only differs at level-$l$. We say that polytope $\hat{\setR}$ is a \textbf{level-$l$ one-adjacent neighbor} of $\setR$ if $\hat{\vc}^l$ and $\vc^l$ only differs in one bit.

The condition that $\vc=\vc^1\vc^2\ldots\vc^l$ and $\hat{\vc}=\vc^1\vc^2\ldots\hat{\vc}^l$ only differs at level-$l$ is important. In this way, the two linear inequalities associated with each pair of bits in $\vc$ and $\hat{\vc}$ have the same coefficients, and the difference in $\vc^l$ and $\hat{\vc}^l$ only changes the direction of the linear inequality. On the other hand, if the two codes differ at a level $l' < l$, then according to the recursive calculation in (\ref{eq:zero_masking_level_l}) and (\ref{eq:coeffs_level_l}), the codes starting from level $l'+1$ will correspond to linear inequalities of different coefficients, leaving our Definition \ref{def:boundary_adj} of adjacency not applicable.

Figure \ref{fig:polytope_traversing}.(b) demonstrates the hierarchical adjacency among the local polytopes. The ReLU NN is the same as in Figure \ref{fig:grid_nets}.(c). Level-1 polytopes $(1,\cdot)$ and $(2,\cdot)$ are both (level-1) one-adjacent to $(0,\cdot)$. Within the level-1 polytope $(0,\cdot)$, Level-2 polytopes $(0,0)$ and $(0,1)$ are (level-2) one-adjacent to each other. Similarly, we can identify the level-2 adjacency of the other two pairs $(1,0)-(1,1)$ and $(2,0)-(2,1)$. Note that in the plot, even thought one can move from polytope $(2,1)$ to $(0,1)$ by crossing one partitioning hyperplane, we do not define these two polytopes as adjacent, as they lie into two different level-1 polytopes.

\section{Polytope Traversing} \label{sec:polytope_traversing}

\begin{figure*}[t]
\center
\includegraphics[width=1.68\columnwidth]{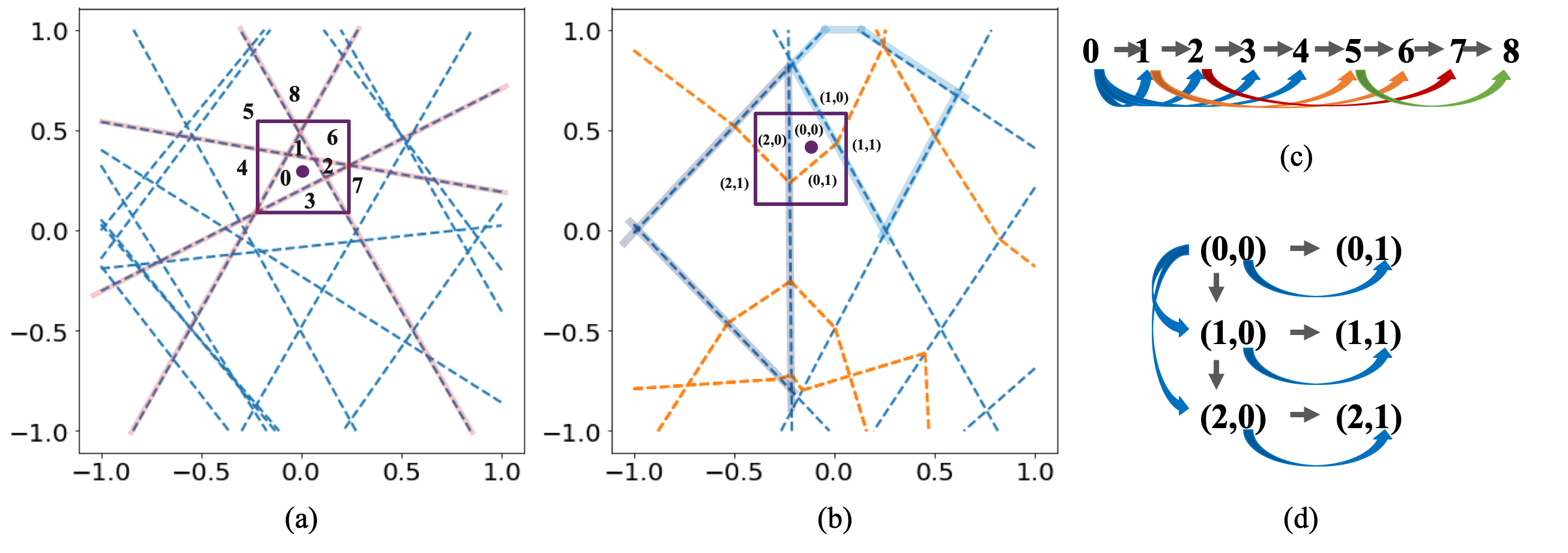}
\caption{\small Demonstration of the BFS-base polytope traversing algorithm. (a) Traversing the 8 local polytopes within the bounded regions. The ReLU NN is the same as in Figure \ref{fig:grid_nets}.(b). The lines marked in red are the boundaries of polytope No.0. (b) Traversing the 6 local polytopes within the bounded region. The ReLU NN is the same as in Figure \ref{fig:grid_nets}.(c). The polytopes are indexed as ``(level-1, level-2)''. (c) The evolution of the BFS queue for traversing the local polytopes in (a). The gray arrows show the traversing order. The colored arrows at the bottom indicate the one-adjacent neighbors added to the queue. (d) The evolution of the hierarchical BFS queue for traversing the local polytopes in (b). The level-1 BFS queue is shown vertically while the level-2 BFS queue is shown horizontally.}
\label{fig:polytope_traversing}
\end{figure*} 

\subsection{The case of one hidden layer} \label{sec:polytope_traversing_I}
The adjacency defined in the previous section provides us an order to traverse the local polytopes: starting from an initial polytope $\setR$, visiting its all one-adjacent neighbors, then visiting all the neighbors' neighbors and so on.

This algorithm can be viewed as breath-first search (BFS) on a \textbf{polytope graph}. To create this graph, we turn each polytope created by the ReLU NN into a node. An edge is added between each pair of polytopes that are one-adjacent to each other. The BFS algorithm uses a queue to keep track the traversing progress. At the beginning of traversing, the initial polytope is added to an empty queue and is marked as visited afterwards. In each iteration, we pop out the first polytope from the queue and identify all of its one-adjacent neighbors. Among these identified polytopes, we add those that have not been visited to the back of the queue and mark them as visited. The iteration stops when the queue is empty.

The key component of the polytope traversing algorithm is to identify a polytope's one-adjacent neighbors. For a polytope $\setR_{\vc}$ coded by $\vc$ of $M$ bits, there are at most $M$ one-adjacent neighbors with codes corresponding to flipping one of the bits in $\vc$. Each valid one-adjacent neighbor must be non-empty and can be reached by crossing a boundary. Therefore, we can check each linear inequality in (\ref{eq:polytope}) and determine whether it is a boundary or redundant. Some techniques of identifying redundant inequalities are summarized in \cite{telgen1983identifying}. By flipping the bits corresponding to the identified boundaries, we obtain the codes of the one-adjacent polytopes.

Equivalently, we can identify the one-adjacent neighbors by going through all $M$ candidate codes and selecting those corresponding to non-empty sets. Checking the feasibility of a set constrained by a set of linear inequalities is often referred to as the ``Phase-I Problem'' of LP and can be solved efficiently by modern LP solvers. During BFS iterations, we can hash the checked codes to avoid checking them repetitively. The BFS-based polytope traversing algorithm is summarized in Algorithm \ref{algo:traverseI}. We now state the correctness of this algorithm with its proof in Appendix.
\begin{theorem}
Given a ReLU NN with one hidden layer of $M$ neurons as specified in (\ref{eq:relu_nn_I}), Algorithm \ref{algo:traverseI} covers all non-empty local polytopes created by the neural network. That is, for all $\vx \in \R^P$, there exists one $\setR_{\vc}$ as defined in (\ref{eq:polytope}) such that $\vx \in \setR_{\vc}$ and $\vc \in \setS_R$, where $\setS_R$ is the result returned by Algorithm \ref{algo:traverseI}.
\label{them:traverseI}
\end{theorem}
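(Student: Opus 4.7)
The plan is to reduce the statement to a purely graph-theoretic fact about BFS plus a geometric connectivity fact about the polytope graph. Define the \emph{polytope graph} $G$ whose vertices are the non-empty polytopes $\setR_{\vc}$ from (\ref{eq:polytope}) and whose edges connect pairs of one-adjacent polytopes as in Definition \ref{def:boundary_adj}. Algorithm \ref{algo:traverseI} is standard BFS on $G$ starting from the initial polytope $\setR_{\vc_0}$ that contains the seed point: it pops a node, enumerates candidate neighbors by flipping one bit of the current code, uses an LP feasibility check to keep only the non-empty ones (i.e., the true edges in $G$), and hashes visited codes to avoid repeats. It is a textbook fact that BFS, executed this way, visits exactly the connected component of $\setR_{\vc_0}$ in $G$. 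So the entire content of the theorem reduces to showing that $G$ is connected.

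To prove connectivity, I would take any two non-empty polytopes $\setR_{\vc}$ and $\setR_{\vc'}$ and construct a walk between them in $G$. Pick interior points $\vx \in \interior(\setR_{\vc})$ and $\vx' \in \interior(\setR_{\vc'})$; such points exist because any polytope with non-empty interior-of-closure inherits it from the strict feasibility of its defining inequalities, and the degenerate case where a polytope has empty interior can be avoided by perturbing $\vx, \vx'$ inside the open halfspaces $(-1)^{c_m}(\vw_m^T \vx + b_m) < 0$. Now consider the straight segment $\vx(t) = (1-t)\vx + t\vx'$ for $t \in [0,1]$. Each partitioning hyperplane $\vw_m^T \vx + b_m = 0$ intersects this segment in at most one point (or the whole segment, which can be ruled out by a generic choice of endpoints). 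Hence the segment meets only finitely many hyperplanes, at parameters $0 < t_1 < t_2 < \cdots < t_K < 1$.

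The main obstacle is handling the possibility that several hyperplanes coincide at some $t_k$, i.e., the segment passes through a lower-dimensional intersection of multiple hyperplanes; at such a point more than one bit of the code would flip at once and consecutive polytopes along the segment would not be one-adjacent. I would handle this by a genericity argument: the set of segment endpoints in $\interior(\setR_{\vc}) \times \interior(\setR_{\vc'})$ for which the segment hits a codimension-$2$ stratum of the hyperplane arrangement has Lebesgue measure zero, so an arbitrarily small perturbation of $\vx$ (resp. $\vx'$) keeps the endpoints in their respective interiors while forcing all crossings to be transverse and to involve exactly one hyperplane. Along such a generic segment, pick any $s_k \in (t_{k-1}, t_k)$ with $s_0=0$, $s_{K+1}=1$, and set $\vc_k = C(\vx(s_k))$. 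Then $\vc_0 = \vc$, $\vc_{K+1}=\vc'$, and consecutive codes $\vc_k, \vc_{k+1}$ differ in exactly one bit (the index of the unique hyperplane crossed between $s_k$ and $s_{k+1}$). Each $\setR_{\vc_k}$ contains a sub-segment of $\vx(t)$ and is therefore non-empty, so $\setR_{\vc_k}$ and $\setR_{\vc_{k+1}}$ are one-adjacent in the sense of Definition \ref{def:boundary_adj}, giving a walk in $G$ from $\setR_{\vc}$ to $\setR_{\vc'}$.

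Combining the two pieces: $G$ is connected, so BFS from $\setR_{\vc_0}$ returns a set $\setS_R$ containing the codes of all non-empty polytopes; since every $\vx \in \R^P$ lies in some non-empty $\setR_{\vc}$ (the polytopes tile $\R^P$), we conclude that for every $\vx$ there is a $\vc \in \setS_R$ with $\vx \in \setR_{\vc}$, which is exactly the claim.
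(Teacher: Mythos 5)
Your proposal takes a genuinely different route from the paper's. You reduce the theorem to connectivity of the polytope graph (as the paper also does) but then prove connectivity geometrically, by walking a straight segment between interior points of two cells and perturbing the endpoints so that the segment crosses one hyperplane at a time. The paper instead argues combinatorially: assuming every single-bit flip of $\vc$ toward $\hat{\vc}$ yields an empty set, it invokes Lemma \ref{them:redundant_ieq} to conclude that all the disagreeing inequalities are redundant, derives $\setR_{\hat{\vc}} \subseteq \setR_{\vc}$ after deleting them, and reaches a contradiction; this lets it reduce the Hamming distance to $\hat{\vc}$ one bit at a time without ever mentioning interiors or genericity. Your segment argument is more geometric and arguably more transparent for full-dimensional cells, but it pays a price exactly where the combinatorial argument does not.

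The genuine gap is your treatment of polytopes with empty interior. Non-empty sets $\setR_{\vc}$ of dimension less than $P$ do occur: with two neurons computing $\sigma(x_1)$ and $\sigma(-x_1)$, the code requiring both to be on gives $\setR_{\vc} = \{\vx \mid x_1 = 0\}$, which is non-empty but has no interior. Your parenthetical fix --- ``perturbing $\vx, \vx'$ inside the open halfspaces $(-1)^{c_m}(\vw_m^T\vx + b_m) < 0$'' --- cannot work for such a cell, because the strict system is infeasible: there is no point to perturb to. So your connectivity proof only covers the full-dimensional cells, whereas the theorem's first sentence claims \emph{all} non-empty local polytopes are covered, and the BFS seed $C(\vx_0)$ may itself be a degenerate cell if $\vx_0$ lies on a hyperplane, in which case your argument cannot even select a starting interior point. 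The formal ``for all $\vx\in\R^P$'' statement could be rescued by adding the observation that every point of $\R^P$ lies in the closure of some open full-dimensional cell, hence in some $\setR_{\vc}$ with non-empty interior, so covering the full-dimensional cells suffices for that weaker claim --- but you do not make this observation, and you would still owe an argument that the BFS reaches the full-dimensional component when started from a degenerate seed. Either add that closure argument plus a treatment of degenerate seeds, or switch to the redundancy-based bit-flipping argument, which handles all non-empty codes uniformly.
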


Algorithm \ref{algo:traverseI} visits all the local polytopes created by a ReLU NN within $\R^P$. The time complexity is exponential to the number of neurons, as all $2^M$ possible activation patterns are checked once in the worst-case scenario. The space complexity is also exponential to the number of neurons as we hash all the checked activation patterns. Furthermore, for each activation pattern, we solve a phase-I problem of LP with $M$ inequalities in $\R^P$. Traversing all local polytopes in $\R^P$, therefore, becomes intractable for neural networks with a large number of neurons. 

Fortunately, traversing in $\R^P$ is usually undesirable. Firstly, a neural network may run into extrapolation issues for points outside the sample distribution. The polytopes far away from the areas covered by the samples are often considered unreliable. Secondly, many real-life applications, to be discussed in Section \ref{sec:apps}, only require traversing within small bounded regions to examine the local behavior of a model. In the next section, we introduce a technique to improve the efficiency when traversing within a bounded region. 

\begin{algorithm}[thb]
\small
\caption{BFS-Based Polytope Traversing} \label{algo:traverseI}
\begin{algorithmic}[1]
\Require A ReLU NN with one hidden layer of $M$ neurons as specified in (\ref{eq:relu_nn_I}).
\Require An initial point $\vx\in\R^P$.
\State Initialize an empty queue $\setQ$ for BFS.
\State Initialize an empty set $\setS_R$ to store the codes of all visited polytopes.
\State Initialize an empty set $\setS_{\vc}$ to store all checked codes. 
\State Calculate $\vx$'s initial polytope code $\vc$ using (\ref{eq:polytope_encode}).
\State Append $\vc$ to the end of the $\setQ$.
\State Add $\vc$ to both $\setS_R$ and $\setS_{\vc}$.
\While {$\setQ$ is not empty}
	\State Pop out the first element in the front of BFS queue: $\vc = \setQ.\text{pop}()$.
	\For {$m=1,2,\ldots,M$}
		\State Create a candidate polytope code $\hat{\vc}$ by flipping one bit in $\vc$: $\hat{c}_m = 1-c_m$ and $\hat{c}_k = c_k \forall k \neq m$.
		\If {$\hat{\vc} \notin \setS_{\vc}$}
			\State Check if $\setR_{\hat{\vc}} = \{ \vx|(-1)^{\hat{c}_k}\left(\vw_k^T\vx + b_k\right) \leq 0,\ k=1,2\ldots,M \}$ is empty using LP.
			\State Add $\hat{\vc}$ to $\setS_{\vc}$.
			\If {$\setR_{\hat{\vc}} \neq \emptyset$}
				\State Append $\hat{\vc}$ to the end of the $\setQ$.
				\State Add $\hat{\vc}$ to $\setS_R$.
			\EndIf
		\EndIf
	\EndFor
\EndWhile
\State Return $\setS_R$.
\end{algorithmic}
\end{algorithm}

\subsection{Polytope traversing within a bounded region} \label{sec:bounded_polytope_traversing}
We first consider a region with each dimension bounded independently: $l_j \leq x_j \leq u_j$, $j=1,2,\ldots,P$. These $2\times P$ linear inequalities creates a hypercube denoted as $\setB$. During the BFS-based polytope traversing, we repetitively flip the direction of one of the $M$ inequalities to identify the one-adjacent neighbors. When the bounded region is small, it is likely that only a small number of the $M$ hyperplanes cut through the hypercube. For the other hyperplanes, the entire hypercube falls onto only one side. Flipping to the other sides of these hyperplanes would leave the bounded region. Therefore, at the very beginning of polytope traversing, we can run through the $M$ hyperplanes to identify those cutting through the hypercube. Then in each neighbor identifying step, we only flip these hyperplanes.

To identify the hyperplanes cutting through the hypercube, we denote the two sides of a hyperplane $\setH$ and $\bar{\setH}$:  $\setH=\{\vx | \vw_m^T\vx + b_m \leq 0 \}$ and $\bar{\setH}=\{\vx | \vw_m^T\vx + b_m \geq 0 \}$. If neither $\setH\cap\setB$ nor $\hat{\setH}\cap\setB$ is empty, we say the hyperplane $\vw_m^T\vx + b_m = 0$ cuts through $\setB$. $\setH\cap\setB$ and $\hat{\setH}\cap\setB$ are both constrained by $2\times P + 1$ inequalities, checking their feasibility can again be formulated as a phase-I problem of LP. We name this technique \textbf{hyperplane pre-screening} and summarize it in algorithm \ref{algo:prescreening}.

\begin{algorithm}[thb]
\small
\caption{Hyperplane Pre-Screening} \label{algo:prescreening}
\begin{algorithmic}[1]
\Require A set of hyperplanes $\vw_m^T\vx + b_m \leq 0$, $m=1,2,\ldots,M$.
\Require A bounded traversing region $\setB$, e.g. $\{\vx | l_j \leq x_j \leq u_j$, $j=1,2,\ldots,P\}$.
\State Initialize an empty set $\setT$ to store all hyperplanes cutting through $\setB$. 
\For {$m=1,2,\ldots,M$}
	\State Get two halfspaces $\setH=\{\vx | \vw_m^T\vx + b_m \leq 0 \}$ and $\bar{\setH}=\{\vx | \vw_m^T\vx + b_m \geq 0 \}$.
	\If {$\setH\cap\setB\neq\emptyset$ and $\hat{\setH}\cap\setB\neq\emptyset$}
		\State Add $m$ to $\setT$.
	\EndIf
\EndFor
\State Return $\setT$.
\end{algorithmic}
\end{algorithm}

Hyperplane pre-screening effectively reduces the complexity from $\orderof{2^M}$ to $\orderof{2^{|\setT|}}$, where $|\setT|$ is the number of hyperplanes cutting through the hypercube. The number $2^{|\setT|}$ corresponds to the worst-case scenario. Since the BFS-based traversing only checks non-empty polytopes and their potential one-adjacent neighbors, the number of activation patterns actually checked can be less than $2^{|\setT|}$. In general, the fewer hyperplanes go through $\setB$ the faster polytope traversing finishes.

Figure \ref{fig:polytope_traversing}.(a) shows traversing the 8 local polytopes within the bounded region. The ReLU NN is the same as in Figure \ref{fig:grid_nets}.(b). The lines marked in red are the hyperplanes cutting through the bounded region and are identified by the pre-screening algorithm. The evolution of the BFS queue is shown in Figure \ref{fig:polytope_traversing}.(c). The gray arrows show the traversing order. The colored arrows at the bottom indicate the one-adjacent neighbors added to the queue. When polytope No.0 is popped from the queue, its one-adjacent neighbors, No.1, 2, 3, and 4, are added to the queue. Next, when polytope No.1 is popped, its one-adjacent neighbors, No.5 and 6, are added. Polytope No.0, although as a one-adjacent neighbor to No.1, is ignored since it has been visited. Similarly, when polytope No.2 is popped, only one of its one-adjacent neighbors, No. 7, is added, since all others have been visited (including those in the queue). The algorithm finished after popping Polytope No.8 as no new polytopes can be added and the queue is empty. All 8 local polytopes in the bounded region are traversed. 

Because $\setB$ is bounded by a set of linear inequalities, the correctness of BFS-based polytope traversing as stated in Theorem \ref{them:traverseI} can be easily extended to this bounded traversing case. Following similar steps of the proof of Theorem \ref{them:traverseI} in the Appendix, we can show that for any two non-empty polytopes overlapped with $\setB$, we can move from one to another by repetitively finding a one-adjacent neighbor within $\setB$. We emphasis that the correctness of BFS-based polytope traversing can be proved for any traversing region bounded by a set of linear inequalities. This realization is critical to generalize our results to the case of ReLU NNs with multiple hidden layers. Furthermore, as any closed convex set can be represented as the intersection of a set of (possibly infinite) halfspaces, the correctness of BFS-based polytope traversing is true for any closed convex $\setB$. 

\subsection{Hierarchical polytope traversing in the case of multiple hidden layers} \label{sec:hierarchical_polytope_traversing}
The BFS-based polytope traversing algorithm can be generalized to ReLU NNs with multiple hidden layers. In section \ref{sec:hierarchical_polytopes}, we described how a ReLU NN with $L$ hidden layers hierarchically partition the input space into polytopes of $L$ different level. Then in section\ref{sec:boundary}, we showed the adjacency of level-$l$ polytopes is conditioned on all of them belonging to the same level-$(l-1)$ polytope. Therefore, to traverse all level-$L$ polytopes, we need to traverse all level-$(L-1)$ polytopes and within each of them traversing the sub-polytopes by following the one-adjacent neighbors.

The procedure above leads us to a recursive traversing scheme. Assume a ReLU NN with L hidden layers and a closed convex traversing region $\setB$. Starting from a sample $\vx \in \setB$, we traverse all level-1 polytopes using the BFS-based algorithm. Inside each level-1 polytope, we traverse all the contained level-2 polytopes, so on and so forth until we reach the level-L polytopes. As shown in (\ref{eq:polytope_level_l}), each level-$l$ polytope is constrained by $\sum_{t=1}^l M_t$ linear inequalities, the way to identify level-$l$ one-adjacent neighbors is largely the same as what we have described in Section \ref{sec:polytope_traversing_I}. Two level-$l$ one-adjacent neighbors must have the same $\sum_{t=1}^{l-1} M_t$ linear inequalities corresponding to $\vc^1\vc^2\ldots\vc^{l-1}$, and have one of the last $M_l$ inequalities differ in direction, so there are $M_l$ cases to check.

We can use hyperplane pre-screening at each level of traversing. When traversing the level-$l$ polytopes within in a level-$(l-1)$ polytope $\setR^{l-1}$, we update the bounded traversing region by taking the intersection of $\setR^{l-1}$ and $\setB$. We then screen the $M_l$ partitioning hyperplanes and only select those passing through this update traversing region.

The BFS-based hierarchical polytope traversing algorithm is summarized in Algorithm \ref{algo:hierarchical_traverse}. The correctness of this algorithm can be proved based on the results in Section \ref{sec:bounded_polytope_traversing}, which guarantees the thoroughness of traversing the level-$l$ polytopes within in any level-$(l-1)$ polytope. Then the overall thoroughness is guaranteed because each level of traversing is thorough. We state the result in the following theorem.
\begin{theorem}
Given a ReLU NN with $L$ hidden layers and a closed convex traversing region $\setB$. Algorithm \ref{algo:hierarchical_traverse} covers all non-empty level-$L$ polytopes created by the neural network that overlap with $\setB$. That is, for all $\vx \in \setB$, there exists one $\setR_{\vc}$ as defined in (\ref{eq:polytope_level_l}) such that $\vx \in \setR_{\vc}$ and $\vc \in \setS_R$, where $\setS_R$ is the result returned by Algorithm \ref{algo:hierarchical_traverse}.
\label{them:hierarchical_traverse}
\end{theorem}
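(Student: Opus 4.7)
The plan is to proceed by induction on $L$, the number of hidden layers, using the correctness of the bounded BFS-based polytope traversing established in Section \ref{sec:bounded_polytope_traversing} as the atomic building block at each level. For the base case $L=1$, Algorithm \ref{algo:hierarchical_traverse} reduces to a single invocation of the level-1 BFS over $\setB$, and the extension of Theorem \ref{them:traverseI} to closed convex $\setB$ already noted in Section \ref{sec:bounded_polytope_traversing} delivers the claim. The substantive work lies in the inductive step.

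For the inductive step, I would assume the result holds for any ReLU NN with $L-1$ hidden layers over any closed convex traversing region. The outer recursion of Algorithm \ref{algo:hierarchical_traverse} performs a level-1 BFS over $\setB$; inside each discovered level-1 polytope $\setR^1$, the algorithm recursively traverses level-2 through level-$L$ sub-polytopes within $\setR^1 \cap \setB$. Since $\setR^1$ is a finite intersection of halfspaces and $\setB$ is closed convex by hypothesis, $\setR^1 \cap \setB$ is again closed convex, so the inductive hypothesis applies to the sub-network whose input-space partitioning past the first layer is governed by the recursion in (\ref{eq:zero_masking_level_l})--(\ref{eq:coeffs_level_l}). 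This guarantees that every non-empty level-$L$ sub-polytope intersecting $\setR^1 \cap \setB$ is visited and added to $\setS_R$.

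To tie the pieces together, I would take any $\vx \in \setB$ with level-$L$ code $\vc = \vc^1 \vc^2 \ldots \vc^L$: the outer level-1 BFS visits the level-1 polytope encoded by $\vc^1$, and the inductive step applied inside that parent visits the level-$L$ polytope $\setR_{\vc}$ containing $\vx$, completing the argument. Chaining the outer bounded-BFS thoroughness with the inductive thoroughness at each successive level is what drives the full statement.

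The hard part will be justifying cleanly that the restricted level-$l$ one-adjacency suffices to explore all sub-polytopes of a given level-$(l-1)$ parent. This is precisely why Section \ref{sec:boundary} insists that adjacent codes agree through level $l-1$: under this restriction the $M_l$ partitioning hyperplanes have fixed coefficients via (\ref{eq:coeffs_level_l}), so Definition \ref{def:boundary_adj} and the bounded-BFS correctness transfer verbatim. I would also need to verify that hyperplane pre-screening cannot drop a genuine boundary inside $\setR^{L-1} \cap \setB$, which is immediate, since it only discards hyperplanes whose entire updated region lies on one side, and flipping such a hyperplane cannot produce a one-adjacent neighbor within $\setB$.
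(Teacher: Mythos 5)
Your proposal is correct and follows essentially the same route as the paper, which itself only sketches the argument by invoking the bounded-region BFS correctness of Section \ref{sec:bounded_polytope_traversing} at each level and noting that thoroughness at every level implies overall thoroughness. Your explicit induction on $L$, together with the observations that $\setR^1\cap\setB$ remains closed convex and that pre-screening cannot discard a genuine boundary, is a faithful (and somewhat more detailed) elaboration of the paper's intended proof.
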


Figure \ref{fig:polytope_traversing}.(b) shows traversing the 6 local polytopes within the bounded region. The ReLU NN is the same as in Figure \ref{fig:grid_nets}.(c).  The evolution of the hierarchical BFS queue is shown in Figure \ref{fig:polytope_traversing}.(d). The level-1 BFS queue is shown vertically while the level-2 BFS queue is shown horizontally. Starting from level-1 polytope $(0,\cdot)$, the algorithm traverses the two level-2 polytopes inside it (line 10 in Algorithm \ref{algo:hierarchical_traverse}). It then identifies the two (level-1) one-adjacent neighbors of $(0,\cdot)$:  $(1,\cdot)$ and $(2,\cdot)$. Every time a level-1 polytope is identified, the algorithm goes into it to traverse all the level-2 polytopes inside (line 36). At the end of the recursive call, all 6 local polytopes in the bounded region are traversed. 

\begin{algorithm}[thb]
\small
\caption{BFS-Based Hierarchical Polytopes Traversing in a Bounded Region} \label{algo:hierarchical_traverse}
\begin{algorithmic}[1]
\Require A ReLU NN with $L$ hidden layers.
\Require A closed convex traversing region $\setB$.
\Require An initial point $\vx\in\setB$.
\State Initialize an empty set $\setS_R$ to store the codes of all visited polytopes.
\State
\Function{HIERARCHICAL\_TRAVERSE}{$\vx, l$} 
\State Initialize an empty queue $\setQ^l$ for BFS at level $l$.
\State Initialize an empty set $\setS_{\vc}^l$ to store all checked level-$l$ codes. 
\State Calculate $\vx$'s initial polytope code $\vc$ recursively using (\ref{eq:polytope_encoding_l}).
\If {$l == L$}
	\State Add $\vc$ to $\setS_R$
\Else
	\State HIERARCHICAL\_TRAVERSE($\vx$,$l$+1) 
\EndIf
\If {$l>1$}	
	\State Get the level-$(l-1)$ polytope code specified by the front segment of $\vc$: $\vc^{1:l-1}=\vc^1\vc^2\ldots\vc^{l-1}$.
	\State Use $\vc^{1:l-1}$ to get the level-$(l-1)$ polytope $\setR_{\vc}^{l-1}$ as in (\ref{eq:polytope_level_l}).
\Else
	\State $\setR_{\vc}^0 = \R^P$
\EndIf
\State Form the new traversing region  $\setB^{l-1} = \setB\cap\setR_{\vc}^{l-1}$.
\State Append the code segment $\vc^l$ to the end of the $\setQ^l$.
\State Add the code segment $\vc^l$ to $\setS_{\vc}$.
\State Get the $M_l$ hyperplanes associated with $\vc^l$.
\State Pre-Screen the hyperplanes associated with $\vc^l$ using Algorithm \ref{algo:prescreening} with bounded region $\setB^{l-1}$.
\State Collect the pre-screening results $\setT$.
\While {$\setQ^l$ is not empty}
	\State Pop the first element in the front of BFS queue: $\vc^l = \setQ^l.\text{pop}()$.
	\For {$m\in\setT$}
		\State Create a candidate polytope code $\hat{\vc}^l$ by flipping one bit in $\vc^l$: $\hat{c}_m^l = 1-c_m^l$ and $\hat{c}_k^l = c_k^l \forall k \neq m$.
		\If {$\hat{\vc}^l \notin \setS_{\vc}$}
			\State Get set $\setR_{\hat{\vc}} =  \{ \vx|(-1)^{\hat{c}_k}\left(\langle\hat{\vw}_k^l,\vx\rangle + \hat{b}_k^l \right) \leq 0,\ k=1,2\ldots,M_l \}$ 
			\State Check if $\setR_{\hat{\vc}} \cap \setB^{l-1}$ is empty using LP.
			\State Add $\hat{\vc}^l$ to $\setS_{\vc}$.
			\If {$\setR_{\hat{\vc}} \cap \setB^{l-1} \neq \emptyset$}
				\State Append $\hat{\vc}^l$ to the end of the $\setQ^l$.
				\If {$l == L$}
					\State Add $\hat{\vc}=\vc^1\vc^2\ldots\hat{\vc}^l$ to $\setS_R$
				\Else
					\State Find a point $\hat{\vx} \in \setR_{\hat{\vc}} \cap \setB^{l-1}$
					\State HIERARCHICAL\_TRAVERSE($\hat{\vx}$,$l$+1) 
				\EndIf
			\EndIf
		\EndIf
	\EndFor
\EndWhile
\EndFunction
\State
\State HIERARCHICAL\_TRAVERSE($\vx$,1) 
\State Return $\setS_R$.
\end{algorithmic}
\end{algorithm}

\section{Network Property Verification Based on Polytope Traversing} \label{sec:apps}
The biggest advantage of the polytope traversing algorithm is its ability to be adapted to solve many different problems of practical interest. Problems such as local adversarial attacks, searching for counterfactual samples, and local monotonicity verification can be solved easily when the model is linear. As we have shown in Sections \ref{sec:hierarchical_polytopes}, the local model within each level-$L$ polytope created by a ReLU NN is indeed linear. The polytope traversing algorithm provides a way to analyze not only the behavior of a ReLU NN at one local polytope but also the behavior within the neighborhood, and therefore enhances our understanding of the overall model behavior. In this section, we describe the details of adapting the polytope traversing algorithm to verify several properties of ReLU NNs. 

\begin{figure*}[t]
\center
\includegraphics[width=1.75\columnwidth]{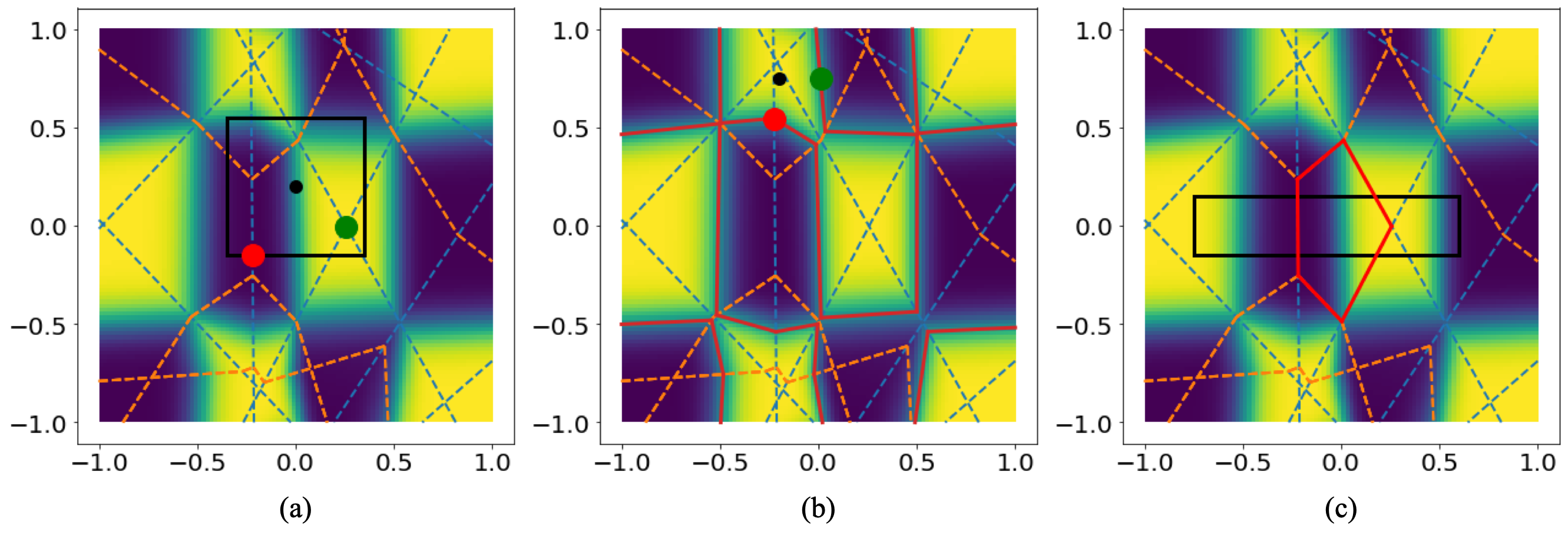}
\caption{\small Demonstration of different applications of the polytope traversing algorithm. We use the ReLU NN in Figure \ref{fig:grid_nets}.(b) as an example. (a) Conducting local adversarial attack by finding the maximum (green) and minimum (red) model predictions within a bounded region. (b) Creating counterfactual samples that are closest to the original sample. The distance are measured in $L_1$ (green) and $L_2$ (red) norms. (c) Monotonicity verification in a bounded region. The polytope in red violates condition of model prediction monotonically decreasing along the horizontal axis.}
\label{fig:apps}
\end{figure*} 

\subsection{Local Adversarial Attacks}
We define the local adversarial attack problem as finding the perturbation within a bounded region such that the model output can be changed most adversarially. Here, we assume the model output to be a scalar in $\R$ and consider three regression cases with different types of response variable: continuous, binary, and categorical. The perturbation region is a convex set around the original sample. For example, we can allow certain features to increase or decrease by certain amount; or we can use a norm ($L_1$, $L_2$, $L_\infty$) ball centered at the original sample. 

In the continuous response case, the one-dimensional output after the last linear layer of a ReLU NN is directly used as the prediction of the target variable. Denote the model function as $f(\cdot)$, the original sample as $\vx_0$, and the perturbation region as $\setB$. The local adversarial attack problem can be written as:
{\small
\begin{equation}
\begin{split}
\max_{\vx\in\setB} |f(\vx) - f(\vx_0)| = \max\Big( \max_{\vx\in\setB} f(\vx) - f(\vx_0), \\
f(\vx_0) - \min_{\vx\in\setB} f(\vx) \Big) \ , \label{eq:local_adversarial_attack}
\end{split}
\end{equation}
}%
which means we need to find the range of the model outputs on $\setB$. We can traverse all local polytopes covered by $\setB$, finding the model output range within each intersection $\setB\cap\setR$, then aggregating all the local results to get the final range. Finding the output range within each $\setB\cap\setR$ is a convex problem with linear objective function, so the optimality can be guaranteed within each polytope. Because our traversing algorithm covers all polytopes overlapped with $\setB$, the final solution also has guaranteed optimality.

In the case of binary response, the one-dimensional output after the last linear layer of a ReLU NN is passed through a logistic/sigmoid function to predict the probability of a sample belonging to class 1. To conduct adversarial attack, we minimize the predicted probability $f(\vx)$ if the true response $y$ is 1, and maximize the prediction if the true response is 0:
{\small
\begin{equation}
\begin{cases}
\max_{\vx\in\setB} f(\vx), \quad y = 0 \\
\min_{\vx\in\setB} f(\vx), \quad y = 1 \ .
\end{cases}
\end{equation}
}%
Because of the monotonicity of the logistic function, the minimizer and maximizer of the probabilistic output are the same minimizer and maximizer of the output after the last linear layer (i.e. the predicted log odds), making it equivalent to the case of continuous response.

In the case of categorical response with levels 1 to $Q$, the output after the last linear layer of a ReLU NN is in $\R^Q$ and is passed through a softmax layer to be converted to probabilistic predictions of a sample belonging to each class. The adversarial sample is generated to minimize the predicted probability of the sample being in its true class. Within each local polytope, the linear models are given by (\ref{eq:local_model}), and the predicted probability of class $q$ can be minimized by finding the maximizer of the following optimization problem:
{\small
\begin{equation}
\max_{\vx\in\setB\cap\setR} \sum_{i=1, i\neq q}^Q e^{(\hat{\vw}_i^o - \hat{\vw}_q^o )^T\vx+ (\hat{b}_i^o - \hat{b}_q^o )} \ , \label{eq:multiclass_adversarial_attack}
\end{equation} 
}%
where $\left(\hat{\vw}_i^o\right)^T$ is the $i$th row of the matrix $\hat{\mW}^o$ and $\hat{b}_i^o$ is the $i$th element in $\hat{\vb}^o$. Since the objective function in (\ref{eq:multiclass_adversarial_attack}) is convex, the optimality of local adversarial attack with polytope traversing is guaranteed.

Figure \ref{fig:apps}.(a) demonstrates local adversarial attack in the case of regression with binary response. The ReLU NN is the same as in Figure \ref{fig:grid_nets}.(b), which predicts the probability of a sample belong to class 1. The predictions across the whole domain are shown as the heat map. Within the region bounded by the black box, we find the minimum and maximum predictions and marked them by red and green respectively. Due to the nature of linear models, the minimizer and maximizer always fall on the intersections of partitioning hyperplanes and/or region boundaries.

\subsection{Counterfactual sample generation}
In classification problems, we are often interested in finding the smallest perturbation on a sample such that the model changes its class prediction. The magnitude of the perturbation is often measured by $L_1$, $L_2$, or $L_\infty$ norm. The optimization problem can be written as:
{\small
\begin{equation}
\min_{\vx} ||\vx-\vx_0||_p \quad \text{s.t.} f_{\setC}(\vx) \neq f_{\setC}(\vx_0) \ , \label{boundary_proj}
\end{equation}
}%
where $\vx_0$ is the original sample, $p$ indicates a specific type of norm, and $f_{\setC}(\cdot)$ is a ReLU NN outputting class predictions.

We can adapt the polytope traversing algorithm to solve this problem. In the case of binary response, each local polytope has an associated hyperplane separating the two classes: $(\hat{\vw}^o)^T\vx + \hat{b}^o=\gamma$, where $\hat{\vw}^o$ and $\hat{b}^o$ are given in (\ref{eq:local_model}), and $\gamma$ is the threshold converting predicted log odds to class. Finding the counterfactual sample within a local polytope $\setR$ can be written as a convex optimization problem:
{\small
\begin{equation}
\min_{\vx} ||\vx-\vx_0||_p \quad \text{s.t.} (-1)^{\hat{y}_0} \left((\hat{\vw}^o)^T\vx + \hat{b}^o\right) > \gamma,\ \vx\in\setR \ . \label{binary_boundary_proj}
\end{equation}
}%
where $\hat{y}_0$ is the original class (0 or 1) predicted by the model.

We start the traversing algorithm from the polytope where $\vx_0$ lies. In each polytope, we solve  (\ref{binary_boundary_proj}). It is possible that the entire polytope fall on one side of the class separating hyperplane and (\ref{binary_boundary_proj}) does not have any feasible solution. If a solution can be obtained, we compare it with the solutions in previously traversed polytopes and keep the one with the smallest perturbation. Furthermore, we use this perturbation magnitude to construct a new bounded traversing region around $\vx_0$. Because no points outside this region can have a smaller distance to the original points, once we finish traversing all the polytopes inside this region, the algorithm can conclude. In practice we often construct this dynamic traversing region as $\setB = \{ \vx\ |\ ||\vx-\vx_0||_{\infty} < d^* \}$, where $d^*$ is the smallest perturbation magnitude so far. When solving for  (\ref{binary_boundary_proj}) in the proceeding polytopes, we add $x\in\setB$ to the constraints. $\setB$ is updated whenever a smaller $d^*$ is found. Because the new traversing region is always a subset of the previous one, our BFS-based traversing algorithm covers all polytopes within the final traversing region under this dynamic setting. The final solution to (\ref{boundary_proj}) is guaranteed to be optimal, and the running time depends on how far the original point is away from a class boundary.

In the case of categorical response with levels 1 to $Q$, the output after the last linear layer of a ReLU NN has $Q$ dimensions and the dimension of the largest value is the predicted class. We ignore the softmax layer at the end because it does not change the rank of the dimensions. Assuming the original example is predicted to belong to class $\hat{q}_0$, we generate counterfactual samples in the rest of $Q-1$ classes. 

We consider one of these classes at a time and denote it as $q$. Within each ReLU NN's local polytope,  the linear models are given by (\ref{eq:local_model}). The area where a sample is predicted to be in class $q$ is enclosed by the intersection of $Q-1$ halfspaces:
{\small
\begin{equation}
\setC_q = \{ \vx|\left(\hat{\vw}_q^o - \hat{\vw}_i^o\right)^T\vx + (\hat{b}_q^o - \hat{b}_i^o ) > 0, \forall i=1,\ldots,Q, i\neq q \}.
\end{equation}
}%
Therefore, within each local polytope, we solve the convex optimization problem:
{\small
\begin{equation}
\min_{\vx} ||\vx-\vx_0||_p \quad \text{s.t.}\  \vx\in\setC_q \cap \setR \ . \label{multi_boundary_proj}
\end{equation}
}%
We compare all feasible solutions of (\ref{multi_boundary_proj}) under different $q$ and keep the one counterfactual sample that is closest to $\vx_0$. The traversing procedure and the dynamic traversing region update is the same as in the binary response case. Since (\ref{multi_boundary_proj}) is convex, the final solution to (\ref{boundary_proj}) is guaranteed to be optimal.

Figure \ref{fig:apps}.(b) demonstrates counterfactual sample generation in the case of binary classification. The ReLU NN is the same as in Figure \ref{fig:grid_nets}.(b) whose class decision boundaries are plotted in red. Given an original sample plotted as the black dot, we generate two counterfactual samples on the decision boundaries. The red dot has the smallest $L_2$ distance to the original point while the green dot has the smallest $L_1$ distance. 

\begin{figure*}[t]
\center
\includegraphics[width=2\columnwidth]{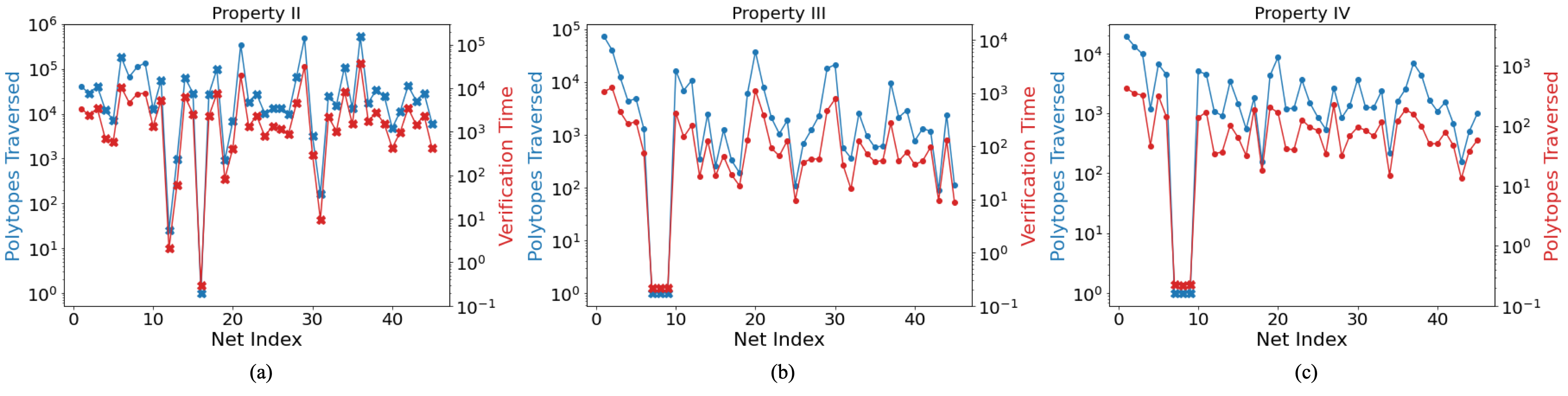}
\caption{\small Network verification results of all 45 ACAS Xu networks for property II (a), III (b), and IV (c). The blue lines and markers show the number of local polytopes traversed during verification. The red lines and markers show the time (in seconds) used. A dot marker indicates the corresponding network satisfies the property while a cross marker indicates the property is violated in at least one of the local polytopes.}
\label{fig:acasxu}
\end{figure*}

\subsection{Local monotonicity verification}
We can adapt the polytope traversing algorithm to verify if a trained ReLU NN is monotonic w.r.t. certain features. We consider the regression cases with continuous and binary response. In both cases, the output after the last linear layer is a scalar. Since the binary response case uses a logistic function at the end which is monotonically increasing itself, we can ignore this additional function. The verification methods for the two cases, therefore, are equivalent.

To check whether the model is monotonic w.r.t. a specific feature within a bounded convex domain, we traverse the local polytopes covered by the domain. Since the model is linear within each polytope, we can easily check the monotonicity direction (increasing or decreasing) by checking the sign of the corresponding coefficients. After traversing all local polytopes covered by the domain, we check their agreement on the monotonicity direction. Since a ReLU NN produces a continuous function, if the local models are all monotonically increasing or all monotonically decreasing, the network is monotonic on the checked domain. If there is a disagreement in the direction, the network is not monotonic. The verification algorithm based on polytope traversing not only provides us the final monotonicity result but also tells us in which part of the domain monotonicity is violated.

Figure \ref{fig:apps}.(c) demonstrates local adversarial attack in the case of regression with binary response. The ReLU NN is the same as in Figure \ref{fig:grid_nets}.(b), which predicts the probability of a sample belong to class 1. The predictions across the whole domain are shown as the heat map. We check if the model is monotonically increasing w.r.t. $x_1$ along the horizontal axis. The domain to check is bounded by the black box. Among the 5 polytopes overlapped with the domain, one of them violates the monotonically increasing condition and is marked in red.

\subsection{Comparison with algorithms based on mixed-integer programming}
The three applications above have been traditionally solved using MIP \cite{anderson2020strong, fischetti2017deep, liu2020certified, tjeng2018evaluating, weng2018towards}. Our algorithms based on polytope traversing have several advantages. First, our method exploits the topological structure created by ReLU NNs and fully explains the model behavior in small neighborhoods. For the $2^M$ cases created by a ReLU NN with $M$ neurons, MIP eliminates the searching branches using branch-and-bound. Our method, on the other hand, eliminates the searching branches by checking the feasibility of the local polytopes and their adjacency. Since a small traversing region often covers a limited number of polytopes, our algorithm has short running time when solving local problems. 

Second, since our algorithm explicitly identifies and visits all the polytopes, the final results contain not only the optimal solution but also the whole picture of the model behavior, providing explainability to the often-so-called black-box model.

Third, our method requires only linear and convex programming solvers and no MIP solvers. Identifying adjacent polytopes requires only linear programming. Convex programming may be used to solve the sub-problem within a local polytope. Our algorithm allows us to incorporate any convex programming solver that is most suitable for the sub-problem, providing much freedom to customize.

Last but probably the most important, our algorithm is highly versatile and flexible. Within each local polytope, the model is linear, which is often the simplest type of model to work with. Any analysis that one runs on a linear model can be transplanted here and wrapped inside the polytope traversing algorithm. Therefore, our algorithm provides a unified framework to verify different properties of piecewise linear networks.

\section{Case Studies} \label{sec:casestudies}

\begin{figure*}[t]
\center
\includegraphics[width=1.85\columnwidth]{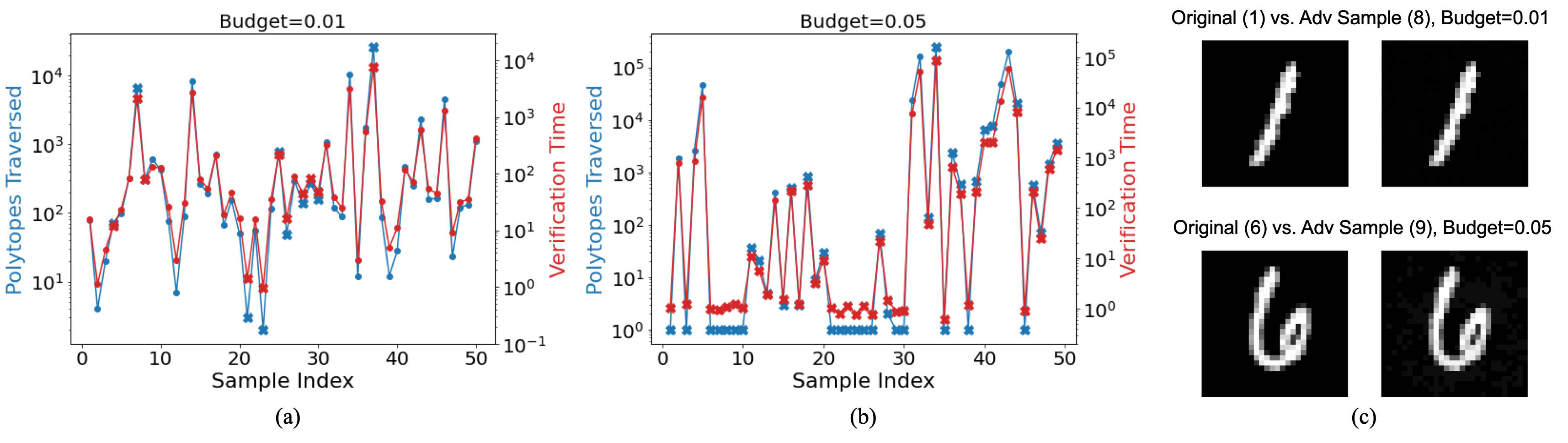}
\caption{\small Adversarial testing of a MNIST digit classification network w.r.t. 50 testing samples (5 samples per digit). The maximum change of an individual pixel value is (a) $+/-0.01$ or (b) $+/-0.05$. The blue lines and markers show the number of local polytopes traversed during verification. The red lines and markers show the time (in seconds) used. A dot marker indicates the network is robust w.r.t. the corresponding sample, while a cross marker indicates at least one adversarial sample can be found. Two adversarial samples are shown in (c).}
\label{fig:mnist}
\end{figure*}

\subsection{ACAS Xu}
We applied the polytope traversing algorithm to verify the safety of ACAS Xu networks \cite{julian2016policy}. The ACAS Xu networks contain an array of 45 ReLU NNs to issue advisories to avoid mid-air collisions for unmanned aircraft. This array of networks was developed to approximate a large lookup table traditionally used in an Airborne Collision Avoidance System so that the massive memory occupation by the table and the lookup time can be reduced. Each network takes five inputs: distance from ownship to intruder, angle from ownship to intruder, heading angle of the intruder w.r.t. ownship, speed of ownship, and speed of intruder. The five possible advisories output from each network are: Clear-of-Conflict (COC), week right, strong right, week left, and strong left. Each network contains six hidden layers with 50 neurons in each layer, resulting in a total of 300 neurons.

The appendix of \cite{katz2017reluplex} listed 10 desired properties that each network should satisfy. In our case study, we selected property II, III, and VI. Given a bounded set in the input space, these properties impose constraints on the rank of the networks' multi-class outputs. The verification of these properties can be formulated as a set of LP within each local polytope. We coded the polytope traversing algorithm in Python and used the LP solver in the SciPy package. Figure \ref{fig:acasxu} shows the verification results. The blue lines and markers show the number of local polytopes traversed during the verification. The red lines and markers show the total verification time in seconds. A dot marker indicates the corresponding network satisfies the property while a cross marker indicates the property is violated in at least one of the local polytopes. For property III and IV, the violating networks are identified after traversing only one of their local polytopes. For property II, most of the violating networks can be identified after traversing 10,000 local polytopes.

\subsection{MNIST}
We also applied the polytope traversing algorithm to verify the robustness of a MNIST digits classifier. The neural network \footnote{https://github.com/vtjeng/MIPVerify\_data/blob/master/weights/mnist/n1.mat} we tested takes a vectorized image as input which contains 784 pixels. It has two hidden layers of sizes 40 and 20 respectively. The output has a dimension of 10, corresponding to each possible digit. This network is trained using traditional techniques without special enforcement on robustness.

The robustness property requires the network's prediction to remain the same when a small perturbation is applied to the pixels of the original sample. In our test, we scaled all the pixel values to the range of 0 to 1. We tested two budget levels: 0.01 and 0.05, which means the maximum change each pixel can take is plus or minus the budget level while remaining inside the 0-1 range. We selected 50 samples from the testing dateset, five for each digit. We ran the polytope traversing algorithm until an adversarial sample was found or the network was verified w.r.t. the testing sample.

Figure \ref{fig:mnist} shows the robustness test results. As in previous experiments, we use blue and red lines to show the number of traversed local polytoeps and computational time (in seconds) respectively. A dot marker indicates the network is robust w.r.t. the corresponding sample, while a cross marker indicates at least one adversarial sample is found. Even under the small budget of 0.01, the network is not robust w.r.t. 11 out of the 50 testing samples. The number of local polytopes covered within the budget varies significantly with different original samples. When the budget is increased to 0.05, many adversarial samples can be found in the same polytope that the original samples fall into. While in some other cases more than 10,000 local polytopes are traversed before finding the first adversarial sample. Two adversarial samples are shown in Figure \ref{fig:mnist}.(c). The small perturbations hardly perceivable to us fool the neural network.

\section{Conclusion} \label{sec:conclusion}
We explored the unique topological structure that ReLU NNs create in the input space; identified the adjacency among the partitioned local polytopes; developed a traversing algorithm based on this adjacency; and proved the thoroughness of polytope traversing. Our polytope traversing algorithm could be extended to other piecewise linear networks such as those containing convolutional or maxpooling layers.

\section{Acknowledgments}
The authors would like to thank Lin Dong, Linwei Hu, Rahul Singh, and Han Wang from Wells Fargo, and Sihan Zeng from Georgia Institute of Technology for their valuable inputs and feedback on this project.

\bibliographystyle{IEEEbib}
\bibliography{references}

\begin{thebibliography}{10}

\bibitem{glorot2011deep}
Xavier Glorot, Antoine Bordes, and Yoshua Bengio,
\newblock ``Deep sparse rectifier neural networks,''
\newblock in {\em Proceedings of the fourteenth international conference on
  artificial intelligence and statistics}. JMLR Workshop and Conference
  Proceedings, 2011, pp. 315--323.

\bibitem{arora2018understanding}
Raman Arora, Amitabh Basu, Poorya Mianjy, and Anirbit Mukherjee,
\newblock ``Understanding deep neural networks with rectified linear units,''
\newblock in {\em International Conference on Learning Representations}, 2018.

\bibitem{lu2017expressive}
Zhou Lu, Hongming Pu, Feicheng Wang, Zhiqiang Hu, and Liwei Wang,
\newblock ``The expressive power of neural networks: A view from the width,''
\newblock in {\em Proceedings of the 31st International Conference on Neural
  Information Processing Systems}, 2017, pp. 6232--6240.

\bibitem{montufar2014number}
Guido~F Montufar, Razvan Pascanu, Kyunghyun Cho, and Yoshua Bengio,
\newblock ``On the number of linear regions of deep neural networks,''
\newblock {\em Advances in Neural Information Processing Systems}, vol. 27, pp.
  2924--2932, 2014.

\bibitem{schmidt2020nonparametric}
Johannes Schmidt-Hieber,
\newblock ``Nonparametric regression using deep neural networks with relu
  activation function,''
\newblock {\em The Annals of Statistics}, vol. 48, no. 4, pp. 1875--1897, 2020.

\bibitem{zou2020gradient}
Difan Zou, Yuan Cao, Dongruo Zhou, and Quanquan Gu,
\newblock ``Gradient descent optimizes over-parameterized deep relu networks,''
\newblock {\em Machine Learning}, vol. 109, no. 3, pp. 467--492, 2020.

\bibitem{bunel2018unified}
Rudy Bunel, Ilker Turkaslan, Philip~HS Torr, Pushmeet Kohli, and M~Pawan Kumar,
\newblock ``A unified view of piecewise linear neural network verification,''
\newblock in {\em Proceedings of the 32nd International Conference on Neural
  Information Processing Systems}, 2018, pp. 4795--4804.

\bibitem{athalye2018synthesizing}
Anish Athalye, Logan Engstrom, Andrew Ilyas, and Kevin Kwok,
\newblock ``Synthesizing robust adversarial examples,''
\newblock in {\em International conference on machine learning}. PMLR, 2018,
  pp. 284--293.

\bibitem{carlini2017towards}
Nicholas Carlini and David Wagner,
\newblock ``Towards evaluating the robustness of neural networks,''
\newblock in {\em 2017 ieee symposium on security and privacy (sp)}. IEEE,
  2017, pp. 39--57.

\bibitem{goodfellow2014explaining}
Ian~J Goodfellow, Jonathon Shlens, and Christian Szegedy,
\newblock ``Explaining and harnessing adversarial examples,''
\newblock {\em arXiv preprint arXiv:1412.6572}, 2014.

\bibitem{szegedy2014intriguing}
Christian Szegedy, Wojciech Zaremba, Ilya Sutskever, Joan Bruna, Dumitru Erhan,
  Ian Goodfellow, and Rob Fergus,
\newblock ``Intriguing properties of neural networks,''
\newblock in {\em 2nd International Conference on Learning Representations,
  ICLR 2014}, 2014.

\bibitem{liu2019algorithms}
Changliu Liu, Tomer Arnon, Christopher Lazarus, Christopher Strong, Clark
  Barrett, and Mykel~J Kochenderfer,
\newblock ``Algorithms for verifying deep neural networks,''
\newblock {\em arXiv preprint arXiv:1903.06758}, 2019.

\bibitem{daniels2010monotone}
Hennie Daniels and Marina Velikova,
\newblock ``Monotone and partially monotone neural networks,''
\newblock {\em IEEE Transactions on Neural Networks}, vol. 21, no. 6, pp.
  906--917, 2010.

\bibitem{gupta2019incorporate}
Akhil Gupta, Naman Shukla, Lavanya Marla, Arinbj{\"o}rn Kolbeinsson, and Kartik
  Yellepeddi,
\newblock ``How to incorporate monotonicity in deep networks while preserving
  flexibility?,''
\newblock {\em arXiv preprint arXiv:1909.10662}, 2019.

\bibitem{liu2020certified}
Xingchao Liu, Xing Han, Na~Zhang, and Qiang Liu,
\newblock ``Certified monotonic neural networks,''
\newblock {\em arXiv preprint arXiv:2011.10219}, 2020.

\bibitem{sharma2020testing}
Arnab Sharma and Heike Wehrheim,
\newblock ``Testing monotonicity of machine learning models,''
\newblock {\em arXiv preprint arXiv:2002.12278}, 2020.

\bibitem{gopinath2019property}
Divya Gopinath, Hayes Converse, Corina Pasareanu, and Ankur Taly,
\newblock ``Property inference for deep neural networks,''
\newblock in {\em 2019 34th IEEE/ACM International Conference on Automated
  Software Engineering (ASE)}. IEEE, 2019, pp. 797--809.

\bibitem{chu2018exact}
Lingyang Chu, Xia Hu, Juhua Hu, Lanjun Wang, and Jian Pei,
\newblock ``Exact and consistent interpretation for piecewise linear neural
  networks: A closed form solution,''
\newblock in {\em Proceedings of the 24th ACM SIGKDD International Conference
  on Knowledge Discovery \& Data Mining}, 2018, pp. 1244--1253.

\bibitem{bastani2016measuring}
Osbert Bastani, Yani Ioannou, Leonidas Lampropoulos, Dimitrios Vytiniotis,
  Aditya Nori, and Antonio Criminisi,
\newblock ``Measuring neural net robustness with constraints,''
\newblock {\em Advances in neural information processing systems}, vol. 29, pp.
  2613--2621, 2016.

\bibitem{ehlers2017formal}
Ruediger Ehlers,
\newblock ``Formal verification of piece-wise linear feed-forward neural
  networks,''
\newblock in {\em International Symposium on Automated Technology for
  Verification and Analysis}. Springer, 2017, pp. 269--286.

\bibitem{katz2017reluplex}
Guy Katz, Clark Barrett, David~L Dill, Kyle Julian, and Mykel~J Kochenderfer,
\newblock ``Reluplex: An efficient smt solver for verifying deep neural
  networks,''
\newblock in {\em International Conference on Computer Aided Verification}.
  Springer, 2017, pp. 97--117.

\bibitem{pulina2010abstraction}
Luca Pulina and Armando Tacchella,
\newblock ``An abstraction-refinement approach to verification of artificial
  neural networks,''
\newblock in {\em International Conference on Computer Aided Verification}.
  Springer, 2010, pp. 243--257.

\bibitem{pulina2012challenging}
Luca Pulina and Armando Tacchella,
\newblock ``Challenging smt solvers to verify neural networks,''
\newblock {\em Ai Communications}, vol. 25, no. 2, pp. 117--135, 2012.

\bibitem{anderson2020strong}
Ross Anderson, Joey Huchette, Will Ma, Christian Tjandraatmadja, and Juan~Pablo
  Vielma,
\newblock ``Strong mixed-integer programming formulations for trained neural
  networks,''
\newblock {\em Mathematical Programming}, pp. 1--37, 2020.

\bibitem{fischetti2017deep}
Matteo Fischetti and Jason Jo,
\newblock ``Deep neural networks as 0-1 mixed integer linear programs: A
  feasibility study,''
\newblock {\em arXiv preprint arXiv:1712.06174}, 2017.

\bibitem{tjeng2018evaluating}
Vincent Tjeng, Kai~Y Xiao, and Russ Tedrake,
\newblock ``Evaluating robustness of neural networks with mixed integer
  programming,''
\newblock in {\em International Conference on Learning Representations}, 2018.

\bibitem{weng2018towards}
Lily Weng, Huan Zhang, Hongge Chen, Zhao Song, Cho-Jui Hsieh, Luca Daniel,
  Duane Boning, and Inderjit Dhillon,
\newblock ``Towards fast computation of certified robustness for relu
  networks,''
\newblock in {\em International Conference on Machine Learning}. PMLR, 2018,
  pp. 5276--5285.

\bibitem{serra2018bounding}
Thiago Serra, Christian Tjandraatmadja, and Srikumar Ramalingam,
\newblock ``Bounding and counting linear regions of deep neural networks,''
\newblock in {\em International Conference on Machine Learning}. PMLR, 2018,
  pp. 4558--4566.

\bibitem{croce2019provable}
Francesco Croce, Maksym Andriushchenko, and Matthias Hein,
\newblock ``Provable robustness of relu networks via maximization of linear
  regions,''
\newblock in {\em the 22nd International Conference on Artificial Intelligence
  and Statistics}. PMLR, 2019, pp. 2057--2066.

\bibitem{robinson2019dissecting}
Haakon Robinson, Adil Rasheed, and Omer San,
\newblock ``Dissecting deep neural networks,''
\newblock {\em arXiv preprint arXiv:1910.03879}, 2019.

\bibitem{sudjianto2020unwrapping}
Agus Sudjianto, William Knauth, Rahul Singh, Zebin Yang, and Aijun Zhang,
\newblock ``Unwrapping the black box of deep relu networks: Interpretability,
  diagnostics, and simplification,''
\newblock {\em arXiv preprint arXiv:2011.04041}, 2020.

\bibitem{yang2020reachability}
Xiaodong Yang, Hoang-Dung Tran, Weiming Xiang, and Taylor Johnson,
\newblock ``Reachability analysis for feed-forward neural networks using face
  lattices,''
\newblock {\em arXiv preprint arXiv:2003.01226}, 2020.

\bibitem{hanin2019deep}
Boris Hanin and David Rolnick,
\newblock ``Deep relu networks have surprisingly few activation patterns,''
\newblock {\em Advances in Neural Information Processing Systems}, vol. 32, pp.
  361--370, 2019.

\bibitem{yang2020enhancing}
Zebin Yang, Aijun Zhang, and Agus Sudjianto,
\newblock ``Enhancing explainability of neural networks through architecture
  constraints,''
\newblock {\em IEEE Transactions on Neural Networks and Learning Systems},
  2020.

\bibitem{zhao2021self}
Wei Zhao, Rahul Singh, Tarun Joshi, Agus Sudjianto, and Vijayan~N Nair,
\newblock ``Self-interpretable convolutional neural networks for text
  classification,''
\newblock {\em arXiv preprint arXiv:2105.08589}, 2021.

\bibitem{xiang2017reachable}
Weiming Xiang, Hoang-Dung Tran, and Taylor~T Johnson,
\newblock ``Reachable set computation and safety verification for neural
  networks with relu activations,''
\newblock {\em arXiv preprint arXiv:1712.08163}, 2017.

\bibitem{croce2018randomized}
Francesco Croce and Matthias Hein,
\newblock ``A randomized gradient-free attack on relu networks,''
\newblock in {\em German Conference on Pattern Recognition}. Springer, 2018,
  pp. 215--227.

\bibitem{croce2020scaling}
Francesco Croce, Jonas Rauber, and Matthias Hein,
\newblock ``Scaling up the randomized gradient-free adversarial attack reveals
  overestimation of robustness using established attacks,''
\newblock {\em International Journal of Computer Vision}, vol. 128, no. 4, pp.
  1028--1046, 2020.

\bibitem{vincent2021reachable}
Joseph~A Vincent and Mac Schwager,
\newblock ``Reachable polyhedral marching (rpm): A safety verification
  algorithm for robotic systems with deep neural network components,''
\newblock in {\em 2021 IEEE International Conference on Robotics and Automation
  (ICRA)}. IEEE, 2021, pp. 9029--9035.

\bibitem{telgen1983identifying}
Jan Telgen,
\newblock ``Identifying redundant constraints and implicit equalities in
  systems of linear constraints,''
\newblock {\em Management Science}, vol. 29, no. 10, pp. 1209--1222, 1983.

\bibitem{julian2016policy}
Kyle~D Julian, Jessica Lopez, Jeffrey~S Brush, Michael~P Owen, and Mykel~J
  Kochenderfer,
\newblock ``Policy compression for aircraft collision avoidance systems,''
\newblock in {\em 2016 IEEE/AIAA 35th Digital Avionics Systems Conference
  (DASC)}. IEEE, 2016, pp. 1--10.

\bibitem{telgen1982minimal}
Jan Telgen,
\newblock ``Minimal representation of convex polyhedral sets,''
\newblock {\em Journal of Optimization Theory and Applications}, vol. 38, no.
  1, pp. 1--24, 1982.

\end{thebibliography}

\section*{Appendix}
\subsection{Proof of Lemma \ref{def:redundant_ieq}}
\begin{lemma}
Given a set $\setR = \{ \vx | g_1(\vx) \leq 0,\ldots, g_M(\vx) \leq 0 \} \neq \emptyset$, then $g_m(\vx)$ is a redundant inequality if the new set formed by flipping this inequality is empty: $\hat{\setR} = \{ \vx | g_1(\vx) \leq 0, \ldots, g_{m}(\vx) \geq 0, \ldots, g_M(\vx) \leq 0 \} = \emptyset$.
\end{lemma}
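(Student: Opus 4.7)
\medskip

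\noindent\textbf{Proof proposal.} The plan is a short two-inclusion argument based on the tautology that flipping one inequality partitions the relaxed feasible set. Define the relaxed set
\[
\tilde{\setR} = \{\vx \mid g_j(\vx) \le 0,\ j=1,\dots,M,\ j\neq m\},
\]
so that by Definition~\ref{def:redundant_ieq} showing $g_m$ is redundant is equivalent to showing $\setR = \tilde{\setR}$. The forward inclusion $\setR \subseteq \tilde{\setR}$ is immediate since $\tilde{\setR}$ drops one of the constraints defining $\setR$, so there is nothing to do.

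For the reverse inclusion $\tilde{\setR}\subseteq \setR$, I would argue by contradiction: suppose there exists $\vx\in\tilde{\setR}\setminus\setR$. Then $\vx$ satisfies every constraint $g_j(\vx)\le 0$ for $j\ne m$, and the only way $\vx$ can fail to lie in $\setR$ is that $g_m(\vx)>0$. In particular $g_m(\vx)\ge 0$, and combined with $g_j(\vx)\le 0$ for $j\ne m$ this places $\vx\in\hat{\setR}$. This contradicts the hypothesis $\hat{\setR}=\emptyset$, so no such $\vx$ exists and $\tilde{\setR}\subseteq \setR$. The two inclusions give $\setR=\tilde{\setR}$, which by definition means $g_m$ is a redundant inequality.

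The argument is elementary and does not actually use linearity of the $g_j$ — only the fact that ``not $g_m(\vx)\le 0$'' implies ``$g_m(\vx)\ge 0$,'' which holds for any real-valued function. The only mildly delicate point, and the place I would be careful in writing, is the transition from the strict inequality $g_m(\vx)>0$ (coming from $\vx\notin\setR$) to the non-strict $g_m(\vx)\ge 0$ (needed to invoke membership in $\hat{\setR}$); I would state this explicitly so that the reader sees why the closed halfspace $\{g_m\ge 0\}$ used in the definition of $\hat{\setR}$ is the right relaxation. No main obstacle is anticipated.
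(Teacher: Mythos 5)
Your proof is correct and follows essentially the same route as the paper's: the paper phrases it as the set identity $\tilde{\setR} = \setR \cup \hat{\setR}$ and concludes $\setR = \tilde{\setR}$ when $\hat{\setR}=\emptyset$, which is exactly your two-inclusion/contradiction argument in disguise. Your explicit remark about passing from $g_m(\vx)>0$ to $g_m(\vx)\ge 0$ is a reasonable point of care but changes nothing substantive.
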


\begin{proof}
Let $\tilde{\setR}$ be the set formed by removing inequality $g_m(\vx) \leq 0$: $\tilde{\setR} = \{ \vx | g_1(\vx) \leq 0, \ldots, g_{m-1}(\vx) \leq 0 ,g_{m+1}(\vx) \leq 0, \ldots, g_M(\vx) \leq 0 \}$. Then $\tilde{\setR} = \setR \cup \hat{\setR}$. If $ \hat{\setR}=\emptyset$, then $\setR = \tilde{\setR}$ and the inequality $g_m(\vx) \leq 0$ satisfies Definition \ref{def:redundant_ieq}.
\end{proof}

Note the other direction of Lemma \ref{them:redundant_ieq} may not hold. One example is when identical inequalities appear in the set: both inequalities in $\setR = \{ \vx | g_1(\vx)\leq0, g_2(\vx)\leq0 \}$ are redundant by definition if $g_1(\cdot)=g_2(\cdot)$. However, the procedure in Lemma\ref{them:redundant_ieq} will not identify them as redundant.

\subsection{Proof of Theorem \ref{them:traverseI}}
\begin{theorem}
Given a ReLU NN with one hidden layer of $M$ neurons as specified in (\ref{eq:relu_nn_I}), Algorithm \ref{algo:traverseI} covers all non-empty local polytopes created by the neural network. That is, for all $\vx \in \R^P$, there exists one $\setR_{\vc}$ as defined in (\ref{eq:polytope}) such that $\vx \in \setR_{\vc}$ and $\vc \in \setS_R$, where $\setS_R$ is the result returned by Algorithm \ref{algo:traverseI}.
\end{theorem}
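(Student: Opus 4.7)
The plan is to reduce Theorem \ref{them:traverseI} to a graph-connectivity statement about the \emph{polytope graph} and then establish connectivity via a line-segment argument. First I would observe that every $\vx \in \R^P$ automatically lies in the non-empty polytope $\setR_{C(\vx)}$ by \eqref{eq:polytope_encode}, so it is enough to show that $\setS_R$ contains the code of every non-empty polytope created by the network.

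Second, Algorithm \ref{algo:traverseI} is a textbook breadth-first search on the polytope graph whose nodes are the non-empty polytopes and whose edges connect one-adjacent pairs. The inner \texttt{for} loop over $m$ enumerates every single-bit flip of the current code, uses LP to test non-emptiness of each candidate, and appends every non-empty, previously unseen neighbor to both the queue $\setQ$ and the output set $\setS_R$; the hash set $\setS_\vc$ prevents repeated work and guarantees termination since there are at most $2^M$ codes. Standard BFS correctness then yields $\setS_R = \{\vc : \setR_\vc \text{ is reachable from the initial polytope via one-adjacency transitions}\}$, so it remains to prove that the polytope graph is connected.

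Third, to establish connectivity, I would fix two non-empty polytopes $\setR_\vc$ and $\setR_{\vc'}$, pick points $\vx \in \setR_\vc$ and $\vx' \in \setR_{\vc'}$, and consider the segment $\gamma(t) = (1-t)\vx + t\vx'$ for $t \in [0,1]$. Since there are only finitely many partitioning hyperplanes and finitely many of their pairwise (or higher-order) intersections, I would apply a small generic perturbation of $\vx$ and $\vx'$ so that each still lies in its original polytope and $\gamma$ meets no intersection of two or more distinct partitioning hyperplanes. The union of such bad intersections is a finite collection of affine subspaces of codimension at least two, hence a Lebesgue-null set, so a valid perturbation exists. Under this general-position condition, as $t$ runs from $0$ to $1$ the point $\gamma(t)$ visits a finite sequence of non-empty polytopes $\setR_{\vc_0} = \setR_\vc, \setR_{\vc_1}, \ldots, \setR_{\vc_K} = \setR_{\vc'}$, and each transition crosses exactly one hyperplane, so consecutive codes $\vc_i$ and $\vc_{i+1}$ differ in exactly one bit. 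This produces a path in the polytope graph from $\vc$ to $\vc'$, and combined with the BFS invariant above it gives $\vc' \in \setS_R$.

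The main technical hurdle is the general-position argument: without the perturbation, $\gamma$ could pass through an intersection of several hyperplanes, inducing a transition that flips more than one bit at once and is therefore not a one-adjacency. A related subtlety is the degenerate case in which $\setR_\vc$ has empty relative interior in $\R^P$ because some of its defining inequalities are tight throughout; there the perturbation must be executed inside the affine hull of $\setR_\vc$, but the same measure-zero reasoning still applies. All remaining steps are routine BFS correctness together with the finiteness of the hyperplane arrangement.
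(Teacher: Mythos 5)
Your proposal is correct in its overall architecture and performs the same reduction as the paper---standard BFS correctness plus connectivity of the polytope graph---but it establishes the connectivity step by a genuinely different argument. The paper's proof is combinatorial: assuming the codes $\vc$ and $\hat{\vc}$ of two non-empty polytopes differ in the first $K$ bits, it shows by contradiction that at least one of those $K$ single-bit flips yields a non-empty polytope (otherwise, by Lemma~\ref{them:redundant_ieq}, all $K$ inequalities would be redundant and could be dropped, forcing $\setR_{\hat{\vc}}\subseteq\setR_{\vc}$ and hence $\setR_{\hat{\vc}}=\emptyset$), so one can iteratively decrease the Hamming distance to the target code. Your proof is geometric: a segment between points of the two polytopes, perturbed into general position so as to miss the codimension-two strata of the hyperplane arrangement, crosses one hyperplane at a time and thereby traces a one-adjacency path. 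Both routes are legitimate; the paper's argument needs no measure-theoretic or general-position machinery and transfers verbatim to the bounded-region and hierarchical settings of Sections~\ref{sec:bounded_polytope_traversing}--\ref{sec:hierarchical_polytope_traversing} because it manipulates only the defining inequalities, while yours is the classical ``chambers of an arrangement are connected through their walls'' argument and is arguably more transparent (it also restricts to any convex $\setB$ since segments stay inside). The one place where your write-up is thinner than you acknowledge is the degenerate case: if a non-empty $\setR_{\vc}$ has empty interior (possible when two neurons induce coincident hyperplanes oriented oppositely in $\vc$), every point of $\setR_{\vc}$ lies permanently on some partitioning hyperplane, so the corresponding bits never flip by a transversal crossing and perturbing within the affine hull does not repair the ``exactly one bit per crossing'' claim at the endpoints. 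This is a measure-zero pathology in weight space---and the paper's own appeal to simultaneously removing several individually redundant inequalities is delicate in exactly the same situations---so it is not a fatal gap, but if you keep the geometric route you should state a full-dimensionality (general-position of the weights) hypothesis explicitly or handle that case separately.
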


\begin{proof}
Since each partitioning hyperplane divide $\R^P$ into two halfspaces, all $2^M$ activation patterns encoded by $\vc$ covers the entire input space. We construct a graph with $2^M$ nodes, each representing a possible polytope code. Some the nodes may correspond to an empty set due to conflicting inequalities. For each pair of non-empty polytope that are one-adjacent to each other, we add an edge to their corresponding nodes. What left to prove is that any pair of non-empty polytopes are connected.

W.l.o.g. assume two nodes with code $\vc$ and $\hat{\vc}$ that differ only in the first $K$ bits. Also assume the polytopes $\setR_{\vc}$ and $\setR_{\hat{\vc}}$ are both non-empty. We will show that there must exist a non-empty polytope $\setR_{\tilde{\vc}}$ that is one-adjacent to $\setR_{\vc}$ with code $\tilde{\vc}$ different from $\hat{\vc}$ in one of the first $K$ bits. As a result, $\tilde{\vc}$ is now one bit closer to $\hat{\vc}$.

We prove the claim above by contradiction. Assuming claim is not true, we flip any one of the first $K$ bits in $\setR_{\vc}$, and the corresponding polytope $\setR_{\tilde{\vc}^k}$ must be empty. By Definition \ref{def:redundant_ieq}, the inequality $(-1)^{c_m}\left(\vw_m^T\vx + b_m\right) \leq 0$, $m=1,2,\ldots,K$ must all be redundant, which means they can be removed from the set of constraints \cite{telgen1982minimal, telgen1983identifying}:
{\small
\begin{equation}
\begin{split}
\setR_{\vc} =& \{ \vx|(-1)^{c_m}\left(\vw_m^T\vx + b_m\right) \leq 0,\ m=1,2\ldots,M \} \\
=& \{ \vx|(-1)^{c_m}\left(\vw_m^T\vx + b_m\right) \leq 0,\ m=K+1,\ldots,M \}  \\
\supseteq &\{ \vx|(-1)^{c_m}\left(\vw_m^T\vx + b_m\right) \leq 0,\ m=1,2,\ldots,M \} \cup \\
&\{ \vx|(-1)^{c_m}\left(\vw_m^T\vx + b_m\right) \geq 0,\ m=1,\ldots,K, \\
&\quad\ \ (-1)^{c_m}\left(\vw_m^T\vx + b_m\right) \leq 0,\ m=K+1,\ldots,M \} \\
=& \setR_{\vc} \cup \setR_{\hat{\vc}} \ .
\end{split}
\label{eq:connected_proof}
\end{equation}
}%
The derived relationship in (\ref{eq:connected_proof}) plus the assumption that all $\setR_{\tilde{\vc}^k}$ must be empty lead to the conclusion that $\setR_{\hat{\vc}} = \emptyset$, which contradict with the non-empty assumption.

Therefore, for any two non-empty polytopes $\setR_{\vc}$ and $\setR_{\hat{\vc}}$, we can create a path from $\setR_{\vc}$ to $\setR_{\hat{\vc}}$ by iteratively finding an intermediate polytope whose code is one bit closer to $\hat{\vc}$. Since the polytope graph covers all input space and all non-empty polytopes are connected, BFS guarantees the thoroughness of traversing.
\end{proof}

%








\end{document}